\theoremstyle{plain}
\newtheorem{theorem}{Theorem}[section]
\newtheorem{proposition}[theorem]{Proposition}
\newtheorem{lemma}[theorem]{Lemma}
\title{The Contextual Lasso: \\ Sparse Linear Models via Deep Neural Networks}
\author{%
  Ryan Thompson\thanks{Corresponding author. Email: \texttt{ryan.thompson1@unsw.edu.au}} \\
  University of New South Wales \\
  CSIRO's Data61 \\
  \And
  Amir Dezfouli\thanks{Part of this work was carried out while the author was at CSIRO's Data61.} \\
  BIMLOGIQ \\
  \And
  Robert Kohn \\
  University of New South Wales
}
\begin{document}

\maketitle

\begin{abstract}
Sparse linear models are one of several core tools for interpretable machine learning, a field of emerging importance as predictive models permeate decision-making in many domains. Unfortunately, sparse linear models are far less flexible as functions of their input features than black-box models like deep neural networks. With this capability gap in mind, we study a not-uncommon situation where the input features dichotomize into two groups: explanatory features, which are candidates for inclusion as variables in an interpretable model, and contextual features, which select from the candidate variables and determine their effects. This dichotomy leads us to the contextual lasso, a new statistical estimator that fits a sparse linear model to the explanatory features such that the sparsity pattern and coefficients vary as a function of the contextual features. The fitting process learns this function nonparametrically via a deep neural network. To attain sparse coefficients, we train the network with a novel lasso regularizer in the form of a projection layer that maps the network's output onto the space of $\ell_1$-constrained linear models. An extensive suite of experiments on real and synthetic data suggests that the learned models, which remain highly transparent, can be sparser than the regular lasso without sacrificing the predictive power of a standard deep neural network.
\end{abstract}

\section{Introduction}
\label{sec:introduction}

Sparse linear models---linear predictive functions in a small subset of features---have a long history in statistics, dating back at least to the 1960s \citep{Garside1965}. Nowadays, against the backdrop of elaborate, black-box models such as deep neural networks, the appeal of sparse linear models is largely their transparency and intelligibility. These qualities are sought in decision-making settings (e.g., consumer finance and criminal justice) and constitute the foundation of interpretable machine learning, a topic that has recently received significant attention \citep{Murdoch2019,Molnar2020,Rudin2022,Marcinkevics2023}. Interpretability, however, comes at a price when the underlying phenomenon cannot be predicted accurately without a more expressive model capable of well-approximating complex functions, such as a neural network. Unfortunately, one must forgo direct interpretation of expressive models and instead resort to post hoc explanations \citep{Ribeiro2016,Lundberg2017}, which have their own flaws \citep{Laugel2019,Rudin2019}.

Motivated by a desire for interpretability and expressivity, this paper focuses on a setting where sparse linear models and neural networks can collaborate together. The setting is characterized by a not-uncommon situation where the input features dichotomize into two groups, which we call explanatory features and contextual features. Explanatory features are features whose effects are of primary interest. They should be modeled via a low-complexity function such as a sparse linear model for interpretability. Meanwhile, contextual features describe the broader predictive context, e.g., the location of the prediction in time or space (see the house pricing example below). These inform which explanatory features are relevant and, for those that are, their exact low-complexity effects. Given this role, contextual features are best modeled via an expressive function class.

The explanatory-contextual feature dichotomy described above leads to the seemingly previously unstudied contextually sparse linear model:
\begin{equation}
\label{eq:cslm}
g\left(\operatorname{E}[y\,|\,\mathbf{x},\mathbf{z}]\right)=\sum_{j\in S(\mathbf{z})}x_j\beta_j(\mathbf{z}).
\end{equation}
To parse the notation, $y\in\mathbb{R}$ is a response variable, $\mathbf{x}=(x_1,\ldots,x_p)^\top\in\mathbb{R}^p$ are explanatory features, $\mathbf{z}=(z_1,\ldots,z_m)^\top\in\mathbb{R}^m$ are contextual features, and $g$ is a link function (e.g., identity for regression or logit for classification).\footnote{The intercept is omitted throughout this paper to ease notation.} Via the contextual features, the set-valued function $S(\mathbf{z})$ encodes the indices of the relevant explanatory features (typically, a small set of $j$s), while the coefficient functions $\beta_j(\mathbf{z})$ encode the effects of those relevant features. The model \eqref{eq:cslm} draws inspiration from the varying-coefficient model \citep{Hastie1993,Fan2008,Park2015}, a special case that assumes all explanatory features are always relevant, i.e., $S(\mathbf{z})=\{1,\ldots,p\}$ for all $\mathbf{z}\in\mathbb{R}^m$. We show that this new model is more powerful for various problems, including energy forecasting and disease prediction. For these tasks, sparsity patterns can be strongly context-dependent.

The main contribution of our paper is a new statistical estimator for \eqref{eq:cslm} called the contextual lasso. The new estimator is inspired by the lasso \citep{Tibshirani1996}, a classic sparse learning tool with excellent properties \citep{Hastie2015}. We focus on tabular datasets as these are the most common use case for the lasso and its cousins. Whereas the lasso fits a sparse linear model that fixes the relevant features and their coefficients once and for all (i.e., $S(\mathbf{z})$ and $\beta_j(\mathbf{z})$ are constant), the contextual lasso fits a contextually sparse linear model that allows the relevant explanatory features and coefficients to change according to the prediction context. To learn the map from contextual feature vector to sparse coefficient vector, we use the expressive power of neural networks. Specifically, we train a feedforward neural network to output a vector of linear model coefficients sparsified via a novel lasso regularizer. In contrast to the lasso, which constraints the coefficients' $\ell_1$-norm, our regularizer constraints the \emph{expectation} of the coefficients' $\ell_1$-norm with respect to $\mathbf{z}$. To implement this new regularizer, we include a novel projection layer at the bottom of the network that maps the network's output onto the space of $\ell_1$-constrained linear models by solving a convex optimization problem.

To briefly illustrate our proposal, we consider data on property sales in Beijing, China, studied in \citet{Zhou2022}. We use the contextual lasso to learn a pricing model with longitude and latitude as contextual features. The response is price per square meter. Figure \ref{fig:house} plots the fitted coefficient functions of three property attributes (explanatory features) and an intercept.
\begin{figure}[t]
\centerline{\includegraphics[width=\linewidth]{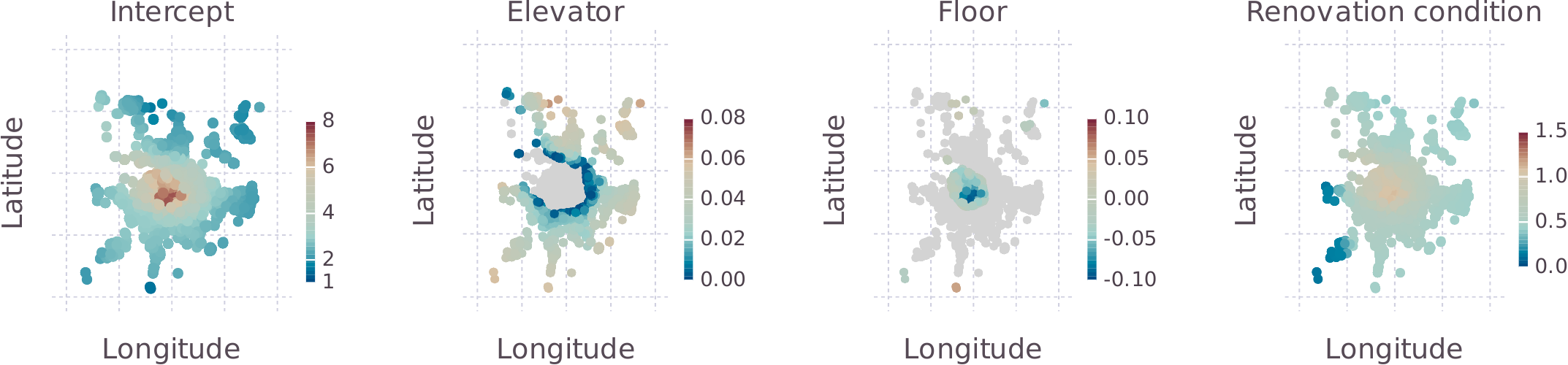}}
\caption{Fitted coefficient functions from the contextual lasso for the house pricing dataset. Colored points indicate coefficient values at different locations. Grey points indicate coefficients equal to zero.}
\label{fig:house}
\end{figure}
The relevance and effect of these attributes can vary greatly with location. The elevator indicator, e.g., is irrelevant throughout inner Beijing, where buildings tend to be older and typically do not have elevators. The absence of elevators also makes it difficult to access higher floors, hence the negative effect of floor on price. Beyond the inner city, the floor is mostly irrelevant. Naturally, renovations are valuable everywhere, but more so for older buildings in the inner city than elsewhere. The flexibility of the contextual lasso to add or remove attributes by location, and simultaneously determine their coefficients, equips sellers with personalized interpretable models containing only the attributes most relevant to them. At the same time, these models outpredict both the lasso and a deep neural network; see Appendix~\ref{app:house}.

The rest of paper is organized as follows. Section~\ref{sec:contextual} introduces the contextual lasso and describes techniques for its computation. Section~\ref{sec:related} discusses connections with earlier related work. Section~\ref{sec:experiments} reports experiments on synthetic and real data. Section~\ref{sec:conclusion} closes the paper with a discussion.

\section{Contextual lasso}
\label{sec:contextual}

This section describes our estimator. To facilitate exposition, we first rewrite the contextually sparse linear model \eqref{eq:cslm} more concisely:
\begin{equation*}
g\left(\operatorname{E}[y\,|\,\mathbf{x},\mathbf{z}]\right)=\mathbf{x}^\top\bm{\beta}(\mathbf{z}).
\end{equation*}
The notation $\bm{\beta}(\mathbf{z}):=\bigl(\beta_1(\mathbf{z}),\ldots,\beta_p(\mathbf{z})\bigr)^\top$ represents a vector coefficient function which is sparse over its codomain. That is, for different values of $\mathbf{z}$, the output of $\bm{\beta}(\mathbf{z})$ contains zeros at different positions. The function $S(\mathbf{z})$, which encodes the set of active explanatory features in \eqref{eq:cslm}, is recoverable as $S(\mathbf{z}):=\{j:\beta_j(\mathbf{z})\neq0\}$.

\subsection{Problem formulation}

At the population level, the contextual lasso comprises a minimization of the expectation of a loss function subject to an inequality on the expectation of a constraint function:
\begin{equation}
\label{eq:population}
\underset{\bm{\beta}\in\mathcal{F}}{\min}\quad\operatorname{E}\left[l\left(\mathbf{x}^\top\bm{\beta}(\mathbf{z}),y\right)\right]\qquad\operatorname{s.t.}\quad\operatorname{E}\left[\|\bm{\beta}(\mathbf{z})\|_1\right]\leq\lambda,
\end{equation}
where the set $\mathcal{F}$ is a class of functions that constitute feasible solutions and $l:\mathbb{R}^2\to\mathbb{R}$ is the loss function, e.g., square loss $l(\hat{y},y)=(y-\hat{y})^2$ for regression or logistic loss $l(\hat{y},y)=-y\log(\hat{y})-(1-y)\log(1-\hat{y})$ for classification. Here, the expectations are taken with respect to the random variables $y$, $\mathbf{x}$, and $\mathbf{z}$. The parameter $\lambda\geq0$ controls the level of regularization. Smaller values of $\lambda$ encourage $\bm{\beta}(\mathbf{z})$ towards zero over more of its codomain. Larger values have the opposite effect. The contextual lasso thus generalizes the lasso, which learns $\bm{\beta}(\mathbf{z})$ as a constant:
\begin{equation*}
\label{eq:lasso}
\begin{split}
\underset{\bm{\beta}}{\min}\quad\operatorname{E}\left[l\left(\mathbf{x}^\top\bm{\beta},y\right)\right]\qquad\operatorname{s.t.}\quad\|\bm{\beta}\|_1\leq\lambda.
\end{split}
\end{equation*}
To reiterate the difference: the lasso coaxes the \emph{fixed coefficients} $\bm{\beta}$ towards zero, while the contextual lasso coaxes the \emph{expectation of the function} $\bm{\beta}(\mathbf{z})$ to zero. The result for the latter is coefficients that can change in value and sparsity with $\mathbf{z}$, provided the function class $\mathcal{F}$ is suitably chosen.

Given a sample $(y_i,\mathbf{x}_i,\mathbf{z}_i)_{i=1}^n$, the data version of the population problem \eqref{eq:population} replaces the unknown expectations with their sample counterparts:
\begin{equation}
\label{eq:sample}
\underset{\bm{\beta}\in\mathcal{F}}{\min}\quad\frac{1}{n}\sum_{i=1}^nl\left(\mathbf{x}_i^\top\bm{\beta}(\mathbf{z}_i),y_i\right)\qquad\operatorname{s.t.}\quad\frac{1}{n}\sum_{i=1}^n\|\bm{\beta}(\mathbf{z}_i)\|_1\leq\lambda.
\end{equation}
The set of feasible solutions to optimization problem \eqref{eq:sample} are coefficient functions that lie in the $\ell_1$-ball of radius $\lambda$ when averaged over the observed data.\footnote{The $\ell_1$-ball is the convex compact set $\{\mathbf{x}\in\mathbb{R}^p:\|\mathbf{x}\|_1\leq\lambda\}$.} To operationalize this estimator, we take $\mathcal{F}=\{\bm{\beta}_{\mathbf{w}}(\mathbf{z}):\mathbf{w}\}$, where $\bm{\beta}_\mathbf{w}(\mathbf{z})$ is a certain architecture of feedforward neural network (described shortly) parameterized by weights $\mathbf{w}$. This choice leads to our core proposal:
\begin{equation}
\label{eq:samplenet}
\underset{\mathbf{w}}{\min}\quad\frac{1}{n}\sum_{i=1}^nl\left(\mathbf{x}_i^\top\bm{\beta}_{\mathbf{w}}(\mathbf{z}_i),y_i\right)\qquad\operatorname{s.t.}\quad\frac{1}{n}\sum_{i=1}^n\|\bm{\beta}_{\mathbf{w}}(\mathbf{z}_i)\|_1\leq\lambda.
\end{equation}
Configuring a feedforward neural network such that its outputs are sparse and satisfy the $\ell_1$-constraint is not trivial. We introduce a novel network architecture to address this challenge.

\subsection{Network architecture}

The neural network architecture---depicted in Figure \ref{fig:architecture}---involves two key components.
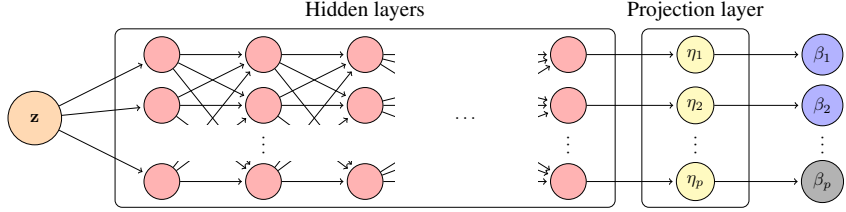
\begin{figure}[t]
\centerline{\resizebox{0.8\linewidth}{!}{\begin{tikzpicture}

\node[circle,draw,minimum size=30,fill=orange!30] (z) at (-0.5,0) {$\mathbf{z}$};

\node[circle,draw,minimum size=20,fill=red!30] (11) at (2,1.25) {};
\node[circle,draw,minimum size=20,fill=red!30] (12) at (2,0.25) {};
\node[circle,draw,minimum size=20,fill=red!30] (13) at (2,-1.25) {};

\draw[,->,shorten >=2pt] (z) to (11);
\draw[,->,shorten >=2pt] (z) to (12);
\draw[,->,shorten >=2pt] (z) to (13);

\node[circle,draw,minimum size=20,fill=red!30] (21) at (4,1.25) {};
\node[circle,draw,minimum size=20,fill=red!30] (22) at (4,0.25) {};
\node[circle,draw,minimum size=20,fill=red!30] (23) at (4,-1.25) {};

\draw[,->,shorten >=2pt] (11) to (21);
\draw[,->,shorten >=2pt] (11) to (22);
\draw[,->,shorten >=2pt] (11) to (23);
\draw[,->,shorten >=2pt] (12) to (21);
\draw[,->,shorten >=2pt] (12) to (22);
\draw[,->,shorten >=2pt] (12) to (23);
\draw[,->,shorten >=2pt] (13) to (21);
\draw[,->,shorten >=2pt] (13) to (22);
\draw[,->,shorten >=2pt] (13) to (23);

\node[circle,draw,minimum size=20,fill=red!30] (31) at (6,1.25) {};
\node[circle,draw,minimum size=20,fill=red!30] (32) at (6,0.25) {};
\node[circle,draw,minimum size=20,fill=red!30] (33) at (6,-1.25) {};

\draw[,->,shorten >=2pt] (21) to (31);
\draw[,->,shorten >=2pt] (21) to (32);
\draw[,->,shorten >=2pt] (21) to (33);
\draw[,->,shorten >=2pt] (22) to (31);
\draw[,->,shorten >=2pt] (22) to (32);
\draw[,->,shorten >=2pt] (22) to (33);
\draw[,->,shorten >=2pt] (23) to (31);
\draw[,->,shorten >=2pt] (23) to (32);
\draw[,->,shorten >=2pt] (23) to (33);

\node[circle,draw,minimum size=20,fill=red!30] (41) at (10,1.25) {};
\node[circle,draw,minimum size=20,fill=red!30] (42) at (10,0.25) {};
\node[circle,draw,minimum size=20,fill=red!30] (43) at (10,-1.25) {};

\draw[,->,shorten >=2pt] (31) to (41);
\draw[,->,shorten >=2pt] (31) to (42);
\draw[,->,shorten >=2pt] (31) to (43);
\draw[,->,shorten >=2pt] (32) to (41);
\draw[,->,shorten >=2pt] (32) to (42);
\draw[,->,shorten >=2pt] (32) to (43);
\draw[,->,shorten >=2pt] (33) to (41);
\draw[,->,shorten >=2pt] (33) to (42);
\draw[,->,shorten >=2pt] (33) to (43);

\node[circle,draw,minimum size=20,fill=yellow!30] (51) at (12.5,1.25) {$\eta_1$};
\node[circle,draw,minimum size=20,fill=yellow!30] (52) at (12.5,0.25) {$\eta_2$};
\node[circle,draw,minimum size=20,fill=yellow!30] (53) at (12.5,-1.25) {$\eta_p$};

\draw[,->,shorten >=2pt] (41) to (51);
\draw[,->,shorten >=2pt] (42) to (52);
\draw[,->,shorten >=2pt] (43) to (53);

\node[circle,draw,minimum size=20,fill=blue!30] (61) at (15,1.25) {$\beta_1$};
\node[circle,draw,minimum size=20,fill=blue!30] (62) at (15,0.25) {$\beta_2$};
\node[circle,draw,minimum size=20,fill=black!30] (63) at (15,-1.25) {$\beta_p$};

\draw[,->,shorten >=2pt] (51) to (61);
\draw[,->,shorten >=2pt] (52) to (62);
\draw[,->,shorten >=2pt] (53) to (63);

\node[rectangle,rounded corners,draw,minimum width=280,minimum height=100] at (6,0) {};
\node at (6,2.1) {\large Hidden layers};

\node[rectangle,rounded corners,draw,minimum width=60,minimum height=100] at (12.5,0) {};
\node at (12.5,2.1) {\large Projection layer};

\node[rectangle,fill=white,minimum width=80,minimum height=80] at (8,0) {};
\node at (8,0) {$\dots$};

\node[rectangle,fill=white,minimum width=150,minimum height=20] at (4,-0.5) {};\

\node at (4,-0.5) {\rotatebox{90}{$\dots$}};
\node at (10,-0.5) {\rotatebox{90}{$\dots$}};
\node at (12.5,-0.5) {\rotatebox{90}{$\dots$}};
\node at (15,-0.5) {\rotatebox{90}{$\dots$}};

\end{tikzpicture}}}
\caption{Network architecture. The contextual features $\mathbf{z}$ pass through a series of hidden layers. The resulting dense coefficients $\eta_1,\ldots,\eta_p$ then enter a projection layer to produce sparse coefficients $\beta_1,\ldots,\beta_p$. Here, the last coefficient is gray to illustrate that it is zeroed-out by the projection layer.}
\label{fig:architecture}
\end{figure}
The first and most straightforward component is a vanilla feedforward network $\bm{\eta}(\mathbf{z}):\mathbb{R}^m\to\mathbb{R}^p$. The purpose of the network is to capture the nonlinear effects of the contextual features on the explanatory features. Since the network involves only hidden layers with standard affine transformations and nonlinear maps (e.g., rectified linear activation functions), the coefficients they produce generally do not satisfy the contextual lasso constraint and are not sparse. To enforce the constraint and attain sparsity, we employ a novel projection layer as the second main component of our network architecture.

The projection layer takes the dense coefficients $\bm{\eta}(\mathbf{z})$ from the network and maps them to sparse coefficients $\bm{\beta}(\mathbf{z})$ by performing a projection onto the $\ell_1$-ball. Because the contextual lasso does not constrain each coefficient vector to the $\ell_1$-ball, but rather constrains the \emph{average} coefficient vector, we project all $n$ coefficient vectors $\bm{\eta}(\mathbf{z}_1),\ldots,\bm{\eta}(\mathbf{z}_n)$ together. That is, we take the final sparse coefficients $\bm{\beta}(\mathbf{z}_1),\ldots,\bm{\beta}(\mathbf{z}_n)$ as the minimizing arguments of a convex optimization problem:
\begin{equation}
\label{eq:projection}
\bm{\beta}(\mathbf{z}_1),\ldots,\bm{\beta}(\mathbf{z}_n):=\underset{\bm{\beta}_1,\ldots,\bm{\beta}_n:\frac{1}{n}\sum_{i=1}^n\|\bm{\beta}_i\|_1\leq\lambda}{\operatorname{arg\,min}}\quad\frac{1}{n}\sum_{i=1}^n\|\bm{\eta}(\mathbf{z}_i)-\bm{\beta}_i\|_2^2.
\end{equation}
The minimizers of this optimization problem are typically sparse thanks to the geometry of the $\ell_1$-ball. The idea of including optimization as a layer in a neural network is explored in previous works \citep[see, e.g.,][]{Amos2017,Agrawal2019}. Yet, to our knowledge, no previous work has studied optimization layers (also known as implicit layers) for inducing sparsity in a neural network.

The optimization problem \eqref{eq:projection} does not admit an analytical solution, though it is solvable by general purpose convex solvers \citep[see, e.g.,][]{Boyd2008}. However, because \eqref{eq:projection} is a highly structured problem, it is also amenable to more specialized algorithms. Such algorithms facilitate the type of scalable computation necessary for deep learning. \citet{Duchi2008} provide a low-complexity algorithm for solving \eqref{eq:projection} when $n=1$. Algorithm~\ref{alg:projection} below is an extension to $n\geq1$.
\begin{algorithm}[ht]
\caption{Projection onto $\ell_1$-ball}
\small
\label{alg:projection}
\begin{algorithmic}
\STATE \textbf{input} Dense coefficients $\bm{\eta}_1,\ldots,\bm{\eta}_n$ and radius $\lambda$
\STATE Set $\bm{\mu}=(|\bm{\eta}_1^\top|,\ldots,|\bm{\eta}_n^\top|)^\top$
\STATE Sort $\bm{\mu}$ in decreasing order: $\mu_i\geq\mu_j$ for all $i<j$
\STATE Set $k_\text{max}=\max\left\{k:\mu_k>\left(\sum_{l=1}^k\mu_l-n\lambda\right)/k\right\}$
\STATE Set $\theta=\left(\sum_{k=1}^{k_\text{max}}\mu_k-n\lambda\right)/k_\text{max}$
\STATE Compute $\bm{\beta}_1,\ldots,\bm{\beta}_n$ as $\beta_{ij}=\operatorname{sign}(\eta_{ij})\max(|\eta_{ij}|-\theta,0)$ for $i=1,\ldots,n$ and $j=1,\ldots,p$
\STATE \textbf{output} Sparse coefficients $\bm{\beta}_1,\ldots,\bm{\beta}_n$
\end{algorithmic}
\end{algorithm}
The algorithm consists of two main steps: (1) computing a thresholding parameter $\theta$ and (2) soft-thresholding the inputs using the computed $\theta$. Critically, the operations comprising Algorithm~\ref{alg:projection} are suitable for computation on a GPU, meaning the model can be trained end-to-end at scale.

Although the projection algorithm involves some nondifferentiable operations, most deep learning libraries provide gradients for these operations, e.g., sort is differentiated by permuting the gradients and abs is differentiated by taking the subgradient zero at zero. The gradient obtained by automatically differentiating through Algorithm~\ref{alg:projection} is the same as that from implicitly differentiating through a convex solver \citep[e.g., using \texttt{cvxpylayers} of ][]{Agrawal2019}, though the latter is slower. In subsequent work, \citet{Thompson2023} derive analytical gradients for a projection layer that maps matrices onto an $\ell_1$-ball. Their result readily adapts to the vector case dealt with here.

Computation of the thresholding parameter is performed only during training. For inference, the estimate $\hat{\theta}$ from the training set is used for soft-thresholding. That is, rather than using Algorithm~\ref{alg:projection} as an activation function when performing inference, we use $T(x):=\operatorname{sign}(x)\max(|x|-\hat{\theta},0)$. The purpose of using the estimate $\hat{\theta}$ rather than recomputing $\theta$ via the algorithm is because the $\ell_1$-constraint applies to the \emph{expected} coefficient vector. It need not be the case that every coefficient vector produced at inference time lies in the $\ell_1$-ball, which would occur if the algorithm is rerun.

Algorithm~\ref{alg:projection} computes the thresholding parameter using the full batch of $n$ observations. The algorithm can also be applied to mini-batches during training. Once training is complete, the estimate $\hat{\theta}$ for inference can be obtained via a single forward pass of the full batch through the network.

\subsection{Grouped explanatory features}

In certain settings, the explanatory features may be organized into groups such that all the features in a group should be selected together. These groups may emerge naturally (e.g., genes in the same biological path) or be constructed for a statistical task (e.g., basis expansions for nonparametric regression). The prevalence of such problems has led to the development of sparse estimators capable of handling groups, one of the most well-known being the group lasso \citep{Yuan2006,Meier2008}. Perhaps unsurprisingly, the contextual lasso extends gracefully to grouped selection.

Let $\mathcal{G}_1,\ldots,\mathcal{G}_g\subseteq\{1,\ldots,p\}$ be a set of $g$ nonoverlapping groups, and let $\bm{\beta}_k(\mathbf{z})$ and $\mathbf{x}_k$ be the coefficient function and explanatory features restricted to group $\mathcal{G}_k$. In the noncontextual setting, the group lasso replaces the $\ell_1$-norm $\|\bm{\beta}\|_1$ of the lasso with a sum of group-wise $\ell_2$-norms $\sum_{k=1}^g\|\bm{\beta}_k\|_2$. To define the \emph{contextual group lasso}, we make the analogous modification to \eqref{eq:samplenet}:
\begin{equation*}
\underset{\mathbf{w}}{\min}\quad\frac{1}{n}\sum_{i=1}^nl\left(\sum_{k=1}^g\mathbf{x}_{ik}^\top\bm{\beta}_{k,\mathbf{w}}(\mathbf{z}_i),y_i\right)\qquad\operatorname{s.t.}\quad\frac{1}{n}\sum_{i=1}^n\sum_{k=1}^g\|\bm{\beta}_{k,\mathbf{w}}(\mathbf{z}_i)\|_2\leq\lambda.
\end{equation*}
Similar to how the absolute values of the $\ell_1$-norm are nondifferentiable at zero, which causes individual explanatory features to be selected, the $\ell_2$-norm is nondifferentiable at the zero vector, causing grouped explanatory features to be selected together. To realize the grouped estimator, we adopt the same architecture as before but replace the previous (ungrouped) projection layer with its grouped counterpart. This change demands a different projection algorithm, presented in Appendix~\ref{app:grouped}.

\subsection{Side constraints}

Besides the contextual (group) lasso constraint, our architecture readily accommodates side constraints on $\bm{\beta}(\mathbf{z})$ via modifications to the projection. For instance, we follow \citet{Zhou2022} in the house pricing example (Figure~\ref{fig:house}) and constrain the coefficients on the elevator and renovation features to be nonnegative. Such sign constraints reflect domain knowledge that these features should not impact price negatively. Appendix~\ref{app:sign} presents the details and proofs of this extension.

\subsection{Pathwise optimization}

The lasso regularization parameter $\lambda$, controlling the size of the $\ell_1$-ball and thus the sparsity level, is typically treated as a tuning parameter. For this reason, algorithms for the lasso usually provide multiple models over a grid of varying $\lambda$, which can then be compared \citep{Friedman2010}. Towards this end, it can be computationally efficient to compute the models pathwise by sequentially warm-starting the optimizer. As \citet{Friedman2007} point out, pathwise computation for many $\lambda$ can be as fast as for a single $\lambda$. For the contextual lasso, warm starts also reduce run time compared with initializing at random weights. More importantly, however, pathwise optimization improves the training quality. This last advantage is a consequence of the network's nonconvex optimization surface. Building up a sophisticated network from a simple one helps the optimizer navigate this surface. Appendix~\ref{app:pathwise} presents our pathwise algorithm and an approach for setting the $\lambda$ grid.

\subsection{Relaxed fit}
\label{sec:relaxed}

A possible drawback to the contextual lasso, and indeed all lasso estimators, is bias of the linear model coefficients towards zero. This bias, which is a consequence of shrinkage from the $\ell_1$-norm, can help or hinder depending on the data. Typically, bias is beneficial when the number of observations is low or the level of noise is high, while the opposite is true in the converse situation \citep[see, e.g.,][]{Hastie2020}. This consideration motivates a relaxation of the contextual lasso that unwinds some, or all, of the bias imparted by the $\ell_1$-norm. We describe an approach in Appendix~\ref{app:relaxed} that extends the proposal of \citet{Hastie2020} for relaxing the lasso. Their relaxation, which simplifies an earlier proposal by \citet{Meinshausen2007}, involves a convex combination of the lasso's coefficients and ``polished'' coefficients from an unregularized least squares fit on the lasso's selected features. We extend this idea from the lasso's fixed coefficients to the contextual lasso's varying coefficients. The benefits of the relaxation are demonstrated empirically in Appendix~\ref{app:relaxed}, where we present an ablation study.

\subsection{Computational complexity}
\label{sec:computational}

A forward or backward pass through the vanilla feedforward component of the network takes $O(md+hd^2+pd)$ time, where $h$ is the number of hidden layers, $d$ is the number of nodes per layer, $m$ is the number of contextual features, and $p$ is the number of explanatory features. A forward or backward pass through the projection algorithm takes $O(p)$ time \citep{Duchi2008}. The time complexity for a pass through the full network over $n$ observations is thus $O(nd(m+hd+p))$. This result suggests that the training time is linear in the sample size $n$ and number of features $m$ and $p$. Actual training times are reported in Appendix~\ref{app:complexity} that demonstrate linear complexity empirically.

\subsection{Package}

We implement the contextual lasso as described in this section in the \texttt{Julia} \citep{Bezanson2017} package \texttt{ContextualLasso}. For training the neural network, we use the deep learning library \texttt{Flux} \citep{Innes2018}. Though the experiments throughout this paper involve square or logistic loss functions, our package supports \emph{any} differentiable loss function, e.g., those in the family of generalized linear models \citep{Nelder1972}. \texttt{ContextualLasso} is available at
\begin{center}
\url{https://github.com/ryan-thompson/ContextualLasso.jl}.
\end{center}

\section{Related work}
\label{sec:related}

The contextual explanation networks in \citet{Al-Shedivat2020} are cousins of the contextual lasso. These neural networks input contextual features and output interpretable models for explanatory features. They include the (nonsparse) contextual linear model, a special case of the contextual lasso where $\lambda=\infty$. In their terminology, the contextual linear model is a ``linear explanation'' model with a ``deterministic encoding'' function. They also explore a ``constrained deterministic encoding'' function that involves a weighted combination of individual fixed linear models with weights determined by the contextual features. To avoid overfitting, they apply $\ell_1$-regularization to the individual models. However, they have no mechanism that encourages the network to combine these sparse models such that the result is sparse. In contrast, the contextual lasso directly regularizes the sparsity of its outputs.


The contextual lasso is also related to several estimators that allow varying sparsity patterns. \citet{Yamada2017} devise the first of these---the localized lasso---which fits a linear model with a different coefficient vector for each observation. The coefficients are sparsified using a lasso regularizer that relies on the availability of graph information to link the observations. \citet{Yang2022} and \citet{Yoshikawa2022} follow with LLSPIN and NGSLL, neural networks that produce linear models with varying sparsity patterns via gating mechanisms. These approaches are distinct from our own, however. First, they do not dichotomize into $\mathbf{x}$ and $\mathbf{z}$, making the resulting model $\mathbf{x}^\top\bm{\beta}(\mathbf{x})$ difficult to interpret. Second, the sparsity level (NGSSL) or nonzero coefficients (LLSPIN) are fixed across observations, making them unsuitable for the contextual setting where both may vary.

\citet{Hallac2015} introduce the network lasso which has a different coefficient vector per observation, clustered using a lasso-style regularizer. They consider problems similar to ours, for which contextual information is available, but do not impose sparsity on the coefficients. \citet{Deleu2021} induce structured sparsity over neural network weights to obtain smaller, pruned networks that admit efficient computation. In our work, we leave the weights as dense and instead induce sparsity over the network's output for interpretability. \citet{Wang2020} propose a network quantization scheme with activation functions that output zeros and ones. Though our approach involves an activation that outputs zeros, we also allow a continuous output. Moreover, their end goal differs from ours; whereas they pursue sparsity to reduce computational complexity, we pursue sparsity for interpretability.

Our work also advances the broader literature at the intersection of feature sparsity and neural networks, an area that has gained momentum over the last few years. See, e.g., the lassonet of \citet{Lemhadri2021,Lemhadri2021a} which selects features in a residual neural network using an $\ell_1$-regularizer on the skip connection. This regularizer is combined with constraints that force a feature's weights on the first hidden layer to zero whenever its skip connection is zero. See also \citet{Scardapane2017} and \citet{Feng2019} for earlier ideas based on the group lasso, and \citet{Chen2021} for another approach. Though related, these methods differ from the contextual lasso in that they involve uninterpretable neural networks with fixed sparsity patterns. The underlying optimization problems also differ---whereas these methods regularize the network's weights, ours regularizes its outputs.

\section{Experiments}
\label{sec:experiments}

The contextual lasso is evaluated here via experimentation on synthetic and real data. As benchmark methods, we consider the (nonsparse) contextual linear model, which uses no projection layer, and a deep neural network, which receives all explanatory and contextual features as inputs.\footnote{The contextual linear model corresponds to the contextual explanation network with a linear explanation and deterministic encoding in \citet{Al-Shedivat2020}.} We further include the lasso, lassonet, and LLSPIN, which also receive all features. The localized lasso does not scale to the experiments that follow, so we instead compare it with the contextual lasso on smaller experiments in Appendix~\ref{app:localized}. Appendix~\ref{app:implementation} provides the implementation details of all methods.

\subsection{Synthetic data}

We consider three different settings of increasing complexity: (1) $p=10$ and $m=2$, (2) $p=50$ and $m=2$, and (3) $p=50$ and $m=5$. Within each setting, the sample size ranges from $n=10^2$ to $n=10^5$. The full simulation design is detailed in Appendix~\ref{app:synthetic}. As a prediction metric, we report the square or logistic loss relative to the intercept-only model. As an interpretability metric, we report the proportion of nonzero features. As a selection metric, we report the F1-score of the selected features; a value of one indicates all true positives recovered and no false positives.\footnote{The $\operatorname{F1-score}:=2\operatorname{TP}/(2\operatorname{TP}+\operatorname{FP}+\operatorname{FN})$, where $\operatorname{TP}$, $\operatorname{FP}$, and $\operatorname{FN}$ are the number of true positive, false positive, and false negative selections.} All three metrics are evaluated on a testing set with tuning on a validation set, both constructed by drawing $n$ observations independently of the training set. Figure~\ref{fig:regression} reports the results for regression (i.e., continuous response).
\begin{figure}[t]
\centerline{\includegraphics[width=\linewidth]{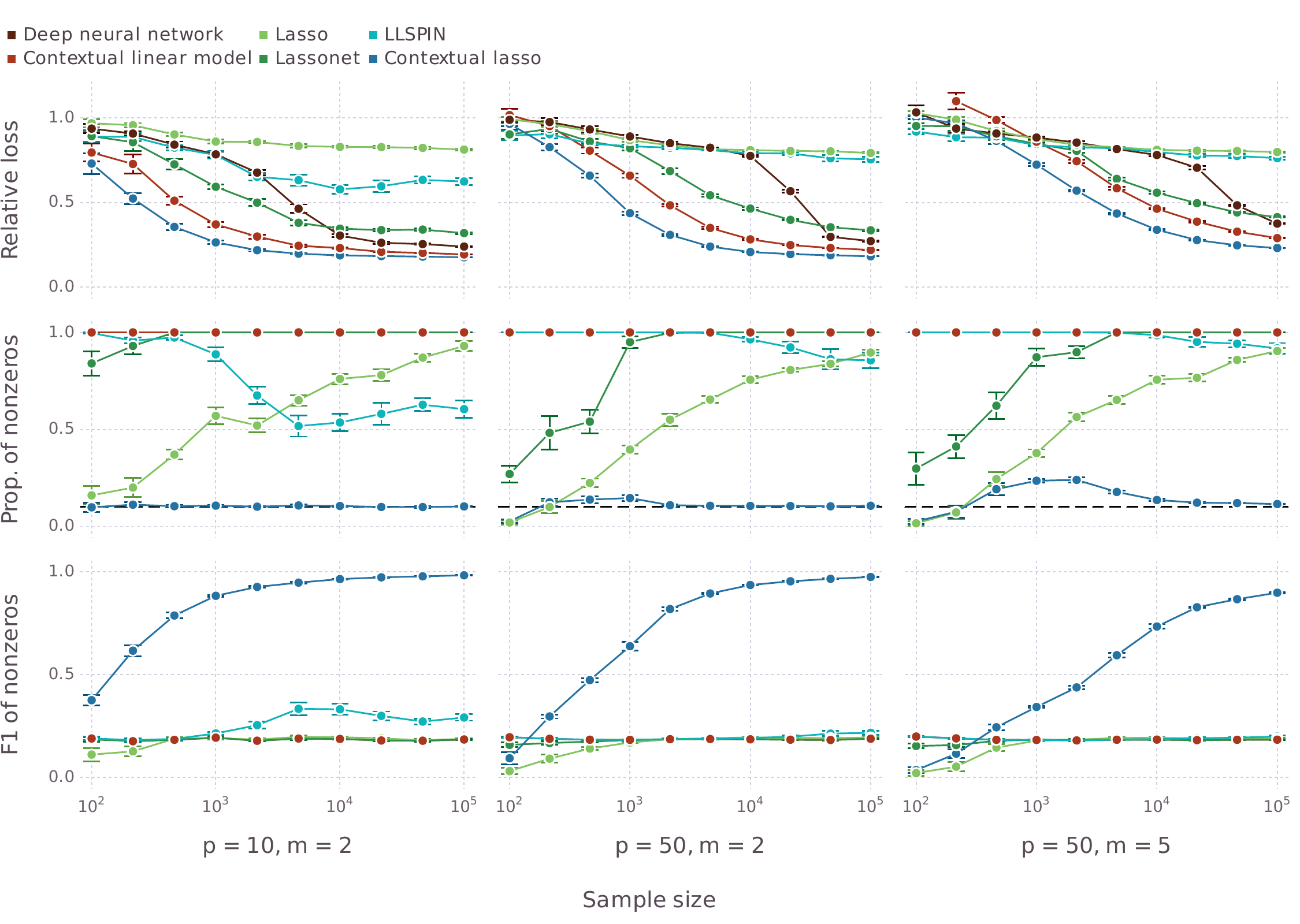}}
\caption{Comparisons on synthetic regression data. Metrics are aggregated over 10 synthetic datasets. Solid points are averages and error bars are standard errors. Dashed horizontal lines in the middle row indicate the true sparsity level.}
\label{fig:regression}
\end{figure}
 
The contextual lasso performs comparably with most of its competitors when the sample size is small. On the other hand, the contextual linear model (the contextual lasso's unregularized counterpart) can perform poorly here (its relative loss had to be omitted from some plots to maintain the aspect ratio). As $n$ increases, the contextual lasso begins to outperform other methods in prediction, interpretability, and selection. Eventually, it learns the correct map from contextual features to relevant explanatory features, recovering only the true nonzeros. Though its unregularized counterpart performs nearly as well in terms of prediction for large $n$, it remains much less interpretable, using all explanatory features. In contrast, the contextual lasso uses just 10\% of the explanatory features on average.

The deep neural network's performance is underwhelming for most $n$. Only for large sample sizes does it begin to approach the prediction performance of the contextual lasso. The lassonet often performs somewhere between the two. These three methods should predict equally well for large enough $n$, though the function learned by the deep neural network and lassonet will remain opaque. The lasso makes some gains with increasing sample size, but lacks the expressive power of the contextual lasso needed to adapt to the complex sparsity pattern of the true model. LLSPIN---the only other method to allow for varying sparsity patterns---is the second best feature selector for $p=10$, though its mediocre performance more generally is likely due to it not exploiting the explanatory-contextual feature dichotomy and not allowing its nonzero coefficients to change.

\subsection{Energy consumption data}

We consider a real dataset containing energy readings for a home in Mons, Belgium \citep{Candanedo2017}. Besides this continuous response feature, the dataset also contains $p=25$ explanatory features in the form of temperature and humidity readings in different rooms of the house and local weather data. We define several contextual features from the time stamp to capture seasonality: month of year, day of week, hour of day, and a weekend indicator. To reflect their cyclical nature, the first three contextual features are transformed using a sine and cosine, leading to $m=7$ contextual features.

The dataset, containing $n=19,375$ observations, is randomly split into training, validation, and testing sets in 0.6-0.2-0.2 proportions. We repeat this random split 10 times, each time recording performance on the testing set, and report the aggregate results in Table~\ref{tab:energy}. As performance metrics, we consider the relative loss and average sparsity level (i.e., average number of selected explanatory features).
\begin{table}[ht]
\centering
\caption{Comparisons on the energy consumption data. Metrics are aggregated over 10 random splits of the data. Averages and standard errors are reported.}
\label{tab:energy}
\small
\begin{tabular}{lll}
\toprule
& Relative loss & Avg. sparsity \\
\midrule
Deep neural network & $0.433\pm0.004$ & $25.0\pm0.0$ \\
Contextual linear model & $0.387\pm0.003$ & $25.0\pm0.0$ \\
Lasso & $0.690\pm0.002$ & $11.6\pm0.4$ \\
Lassonet & $0.423\pm0.003$ & $25.0\pm0.0$ \\
LLSPIN & $0.639\pm0.005$ & $24.5\pm0.1$ \\
Contextual lasso & $0.356\pm0.003$ & $2.8\pm0.4$ \\
\bottomrule
\end{tabular}
\end{table}
Among all methods, the contextual lasso leads to the lowest test loss, outperforming even the deep neural network and lassonet.\footnote{The lassonet with tuned $\lambda$ uses nearly all features here. However, manually choosing $\lambda$ to attain a sparsity level similar to the contextual lasso substantially degrades its performance.} Importantly, this excellent prediction performance is achieved while maintaining a high level of interpretability. In contrast to most other methods, which use all (or nearly all) available explanatory features, the predictions from the contextual lasso arise from linear models containing just 2.8 explanatory features on average! These linear models are also much simpler than those from the lasso, which typically involve more than four times as many features.

The good predictive performance of the contextual lasso suggests a seasonal pattern of sparsity. To investigate this phenomenon, we apply the fitted model to a randomly sampled testing set and plot the resulting sparsity levels as a function of the hour of day in Figure \ref{fig:energy}.
\begin{figure}[t]
\centering
\centerline{\includegraphics[width=0.555\linewidth]{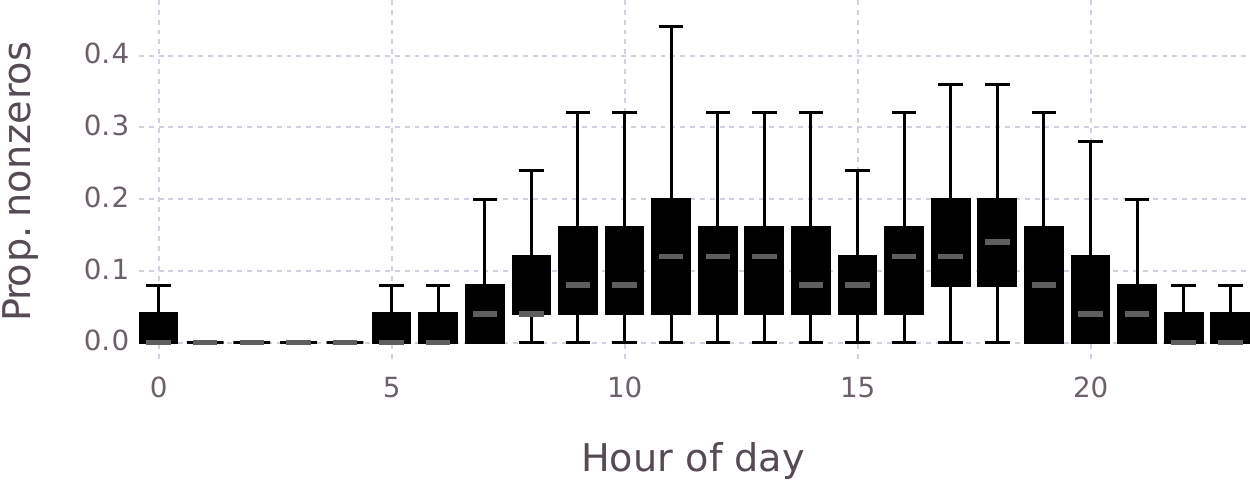}}
\caption{Explanatory feature sparsity as a function of hour of day for the estimated energy consumption model. The sparsity level varies within each hour because the other contextual features vary.}
\label{fig:energy}
\end{figure}
The model is typically highly sparse in the late evening and early morning. Between 10 pm and 6 am, the median proportion of nonzero coefficients is 0\%. There is likely little or no activity inside the house at these times, so sensor readings from within the house---which constitute the majority of the explanatory features---are irrelevant. The number of active explanatory features rises later in the day, peaking around lunchtime and dinnertime. Overall, a major benefit of the contextual lasso, besides its good predictions, is the ability to identify a parsimonious set of factors driving energy use at any given time of day.

\subsection{Parkinson's telemonitoring data}

We illustrate the contextual lasso on \emph{grouped explanatory features} using data from a study on the progression of Parkinson's disease in 42 patients \citep{Tsanas2009}. The task is to predict disease progression (a continuous variable) using 16 vocal characteristics of the patients as measured at different times throughout the study. As \citet{Tsanas2009} point out, these vocal characteristics can relate nonlinearly to disease progression. To account for these effects, we compute a five-term cubic regression spline per explanatory feature ($p=16\times5=80$). Each spline forms a single group of explanatory features ($g=16$). The contextual features are the age and sex of the patients ($m=2$).

The dataset of $n=5,875$ observations is again partitioned into training, validation, and testing sets in the same proportions as before. As a new benchmark, we evaluate the group lasso, which is applied to splines of all explanatory and contextual features. The deep neural network and lassonet are applied to the original (nonspline) features.\footnote{Inputting the splines to these methods does not improve their performance.} The lasso is also applied to the original features to serve as a linear benchmark. The results are reported in Table~\ref{tab:parkinsons}.
\begin{table}[ht]
\centering
\caption{Comparisons on the Parkinson's telemonitoring data. Metrics are aggregated over 10 random splits of the data. Averages and standard errors are reported.}
\label{tab:parkinsons}
\small
\begin{tabular}{lll}
\toprule
& Relative loss & Avg. sparsity \\
\midrule
Deep neural network & $0.367\pm0.015$ & $16.0\pm0.0$ \\
Lasso & $0.885\pm0.005$ & $3.1\pm0.1$ \\
Group lasso & $0.710\pm0.006$ & $4.2\pm0.4$ \\
Lassonet & $0.263\pm0.007$ & $15.5\pm0.2$ \\
Contextual group lasso & $0.113\pm0.006$ & $1.6\pm0.3$ \\
\bottomrule
\end{tabular}
\end{table}
The purely linear estimator---the lasso--performs worst overall. The group lasso improves over the lasso, supporting claims of nonlinearity in the data. The contextual group lasso is, however, the star of the show. Its models predict nearly three-times better than the next best competitor (lassonet) and are sparser than those from any other method.

Setting aside predictive accuracy, a major benefit of the contextual group lasso (compared with the deep neural network and lassonet) is that it remains highly interpretable. To illustrate, we consider the fitted spline function (i.e., the spline multiplied by its coefficients from the contextual group lasso) on the detrended fluctuation analysis (DFA) feature, which characterizes turbulent noise in speech. Figure~\ref{fig:parkinsons} plots the function at three different ages of patient.
\begin{figure}[t]
\centering
\centerline{\includegraphics[width=0.777\linewidth]{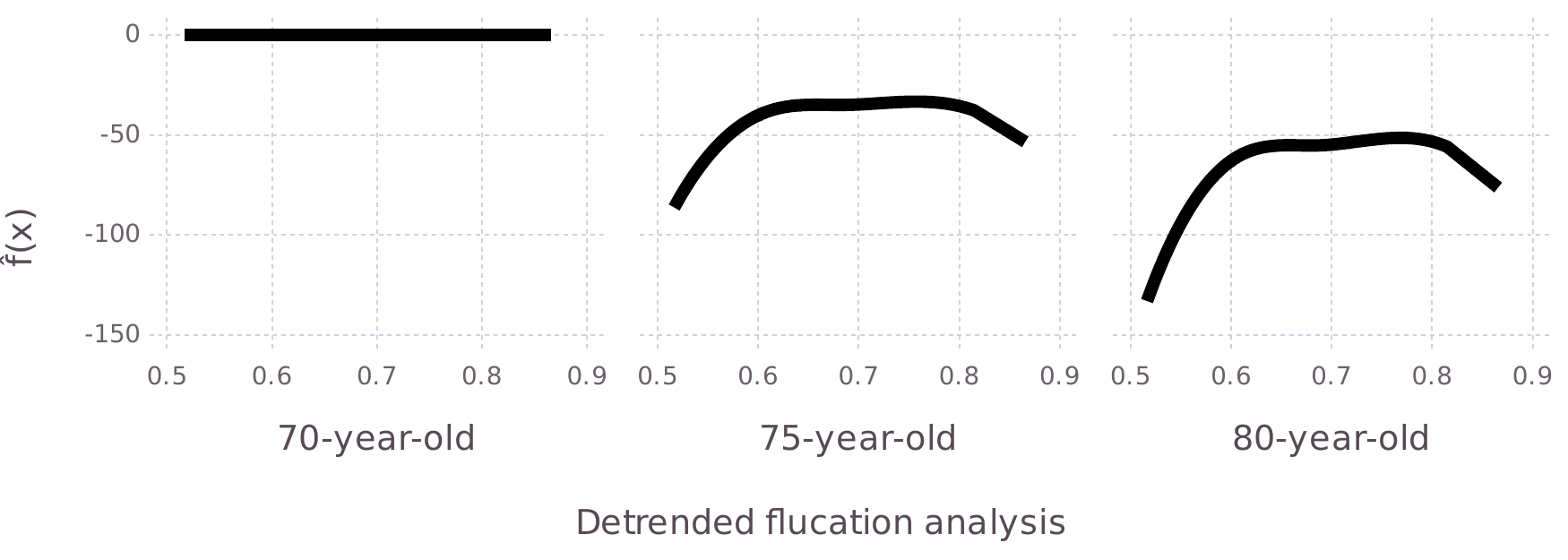}}
\caption{Fitted spline function from the contextual lasso for the detrended fluctuation analysis (DFA) explanatory feature. The age explanatory feature is varied while the sex feature is set to female.}
\label{fig:parkinsons}
\end{figure}
For 70-year-olds, the function is zero, indicating DFA is not yet a good predictor of Parkinson's. At 75, the function becomes nonzero, taking on a concave shape. It becomes even more concave and negative 80. The models reported in \citet{Tsanas2009} also had this coefficient negative, but fixed across all ages. In contrast, the contextual lasso identifies DFA and other features as relevant only for patients of certain ages and sex.

\subsection{Additional experiments}

Experiments for classification on synthetic and real data are available in Appendix~\ref{app:classification}. In Appendix~\ref{app:high}, we report high-dimensional experiments with $p=1,000$ and fixed coefficient experiments (i.e., the lasso's home court). Appendix~\ref{app:stability} investigates the stability of the contextual lasso with respect to the random initialization. Appendix~\ref{app:datasets} provides hyperlinks to the datasets used throughout the paper.

\section{Discussion}
\label{sec:conclusion}

Contextual sparsity is an important extension of the classical notion of feature sparsity. Rather than fix the relevant features once and for all, contextual sparsity allows feature relevance to depend on the prediction context. To tackle this intricate statistical learning problem, we devise the contextual lasso. This new estimator utilizes the expressive power of deep neural networks to learn interpretable sparse linear models with sparsity patterns that vary with the contextual features. The optimization problem that defines the contextual lasso is solvable at scale using modern deep learning frameworks. Grouped explanatory features and side constraints are readily accommodated by the contextual lasso's neural network architecture. An extensive experimental analysis of the new estimator illustrates its good prediction, interpretation, and selection properties in various settings. To the best of our knowledge, the contextual lasso is the only tool currently available for handling the contextually sparse setting.

The problem of deciding the explanatory-contextual feature split is the same as that faced with varying-coefficient models. Though the literature on varying-coefficient models is extensive, there are no definitive rules for partitioning the features in general. In the housing and energy examples, the contextual features are spatial or temporal effects, which are distinct from the remaining features. In the telemonitoring example, the patient attributes (age and sex) differ fundamentally from the vocal characteristics. Ultimately, the partition for any given application should be guided by domain expertise with consideration to the end goal. If one needs to interpret the exact effect of a feature, that feature should be an explanatory feature. If a feature's effect is of secondary interest, or it is suspected that the feature influences the structural form of the model, that feature should be a contextual feature. If the user determines there are no contextual features, the ordinary lasso is a more appropriate tool.

It remains an important avenue of future research to establish a solid theoretical foundation for the contextual lasso. The statistical properties of the lasso in terms of estimation, prediction, and selection are now well-established in theory \citep{Bunea2007,Raskutti2011,Shen2013,Zhang2014}. The synthetic experiments in our paper suggest that the contextual lasso satisfies similar properties, though theoretically establishing these results is challenging. Statistical convergence results for vanilla feedforward neural networks \citep[e.g.,][]{Schmidt-Hieber2020} do not apply in our setting due to the projection layer. Moreover, to our knowledge, no statistical guarantees exist for neural networks configured with convex optimization layers that otherwise might apply here. It is also important to understand when the contextual lasso's performance is matched by a deep neural network, since both should predict well for large samples in the contextually sparse linear regime.

\bibliographystyle{plainnat}
\bibliography{library}

\newpage

\appendix

\section{House pricing data}
\label{app:house}

We expand the analysis in Section~\ref{sec:introduction} and compare the statistical performance of the contextual lasso with its competitors using the house pricing data. Following \citet{Zhou2022}, the explanatory features are the elevator indicator, renovation condition, floor of the apartment, and number of living rooms and bathrooms ($p=5$). The contextual features are longitude and latitude ($m=2$). We randomly sample training, validation, and testing sets of size $n=15,000$ and report the results across 10 random data splits in Table~\ref{tab:house}.
\begin{table}[ht]
\centering
\caption{Comparisons of methods on the house pricing data. Metrics are aggregated over 10 random splits of the data. Averages and standard errors are reported.}
\label{tab:house}
\small
\begin{tabular}{lll}
\toprule
& Relative loss & Avg. sparsity \\
\midrule
Deep neural network & $0.515\pm0.003$ & $5.0\pm0.0$ \\
Contextual linear model & $0.505\pm0.002$ & $5.0\pm0.0$ \\
Lasso & $0.892\pm0.001$ & $5.0\pm0.0$ \\
Lassonet & $0.521\pm0.003$ & $5.0\pm0.0$ \\
LLSPIN & $0.729\pm0.031$ & $4.5\pm0.2$ \\
Contextual lasso & $0.498\pm0.002$ & $2.9\pm0.3$ \\
\bottomrule
\end{tabular}
\end{table}
The contextual lasso delivers sparse models and with competitive prediction performance. The contextual linear model trails closely in prediction, though it does not offer a similar level of parsimony. While also producing good predictions, the deep neural network and lassonet do not offer the same interpretability. The lasso lags far behind the contextual lasso and other neural network-based methods, suggesting that the contextual features have nonlinear effects.

\section{Grouped explanatory features}
\label{app:grouped}

Algorithm~\ref{alg:projection-group} presents the routine for projecting onto the group $\ell_1$-ball.
\begin{algorithm}[ht]
\caption{Projection onto group $\ell_1$-ball}
\small
\label{alg:projection-group}
\begin{algorithmic}
\STATE \textbf{input} Dense group coefficients $\bm{\eta}_1^{(k)},\ldots,\bm{\eta}_n^{(k)}$ ($k=1,\ldots,g$) and radius $\lambda$
\STATE Compute group-wise norms $\xi_i^{(k)}=\|\bm{\eta}_i^{(k)}\|_2$ for $i=1,\ldots,n$ and $k=1,\ldots,g$
\STATE Run Algorithm~\ref{alg:projection} with $\xi_1^{(k)},\ldots,\xi_n^{(k)}$ ($k=1,\ldots,g$) and $\lambda$ to get $\bar{\xi}_1^{(k)},\ldots,\bar{\xi}_n^{(k)}$ ($k=1,\ldots,g$)
\STATE Compute $\bm{\beta}_1^{(k)},\ldots,\bm{\beta}_n^{(k)}$ as $\bm{\beta}_i^{(k)}=\bm{\eta}_i^{(k)}\bar{\xi}_i^{(k)}/\xi_i^{(k)}$ for $i=1,\ldots,n$ and $k=1,\ldots,g$
\STATE \textbf{output} Group-sparse coefficients $\bm{\beta}_1^{(k)},\ldots,\bm{\beta}_n^{(k)}$ ($k=1,\ldots,g$)
\end{algorithmic}
\end{algorithm}
To summarize the algorithm, the norm of each group is projected onto the $\ell_1$-ball using Algorithm~\ref{alg:projection}, and then each set of group coefficients is rescaled by the resulting projected norms. These projected norms can be zero after thresholding, yielding sparsity across the groups. For the correctness of Algorithm~\ref{alg:projection-group}, refer to Theorem 4.1 in \citet{vandenBerg2008}, which establishes the validity of this type of thresholding.

\section{Side constraints}
\label{app:sign}

To simplify notation here, we refer to $\bm{\eta}(\mathbf{z}_i)$ by the shorthand $\bm{\eta}_i$. The $\ell_1$-projection with sign constraints is
\begin{equation}
\label{eq:signprojection}
\begin{split}
\underset{\bm{\beta}_1,\ldots,\bm{\beta}_n}{\min}\quad&\frac{1}{n}\sum_{i=1}^n\|\bm{\eta}_i-\bm{\beta}_i\|_2^2 \\
\operatorname{s.t.}\quad&\frac{1}{n}\sum_{i=1}^n\|\bm{\beta}_i\|_1\leq\lambda \\
&\beta_{ij}\geq0,\,i=1,\ldots,n,\,j\in\mathcal{P} \\
&\beta_{ij}\leq0,\,i=1,\ldots,n,\,j\in\mathcal{N}.
\end{split}
\end{equation}
Here, $\mathcal{P}\subseteq\{1,\ldots,p\}$ and $\mathcal{N}\subseteq\{1,\ldots,p\}$ index the explanatory features whose coefficients are restricted nonnegative and nonpositive, respectively. Proposition \ref{prop:simpleprojection} states that \eqref{eq:signprojection} reduces to a simpler problem directly solvable by Algorithm~\ref{alg:projection}.
\begin{proposition}
\label{prop:simpleprojection}
Let $\bm{\eta}_1,\ldots,\bm{\eta}_n\in\mathbb{R}^p$. Define $\tilde{\bm{\eta}}_1,\ldots,\tilde{\bm{\eta}}_n$ elementwise as
\begin{equation*}
\tilde{\eta}_{ij}=
\begin{cases}
0 & \text{if }\eta_{ij}<0\land j\in\mathcal{P} \\
0 & \text{if }\eta_{ij}>0\land j\in\mathcal{N} \\
\eta_{ij} & \text{otherwise}
\end{cases},\quad i=1,\ldots,n,\,j=1,\ldots,p.
\end{equation*}
Then optimization problem \eqref{eq:signprojection} admits the same optimal solution as
\begin{equation}
\label{eq:propprojection}
\begin{split}
\underset{\bm{\beta}_1,\ldots,\bm{\beta}_n}{\min}\quad&\frac{1}{n}\sum_{i=1}^n\|\tilde{\bm{\eta}}_i-\bm{\beta}_i\|_2^2 \\
\operatorname{s.t.}\quad&\frac{1}{n}\sum_{i=1}^n\|\bm{\beta}_i\|_1\leq\lambda.
\end{split}
\end{equation}
\end{proposition}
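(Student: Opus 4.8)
The plan is to show that the two problems \eqref{eq:signprojection} and \eqref{eq:propprojection} share the same optimal solution by exploiting the separability of the objective across coordinates, together with the observation that the sign constraints only ever ``clip'' entries whose unconstrained soft-thresholded value would have the wrong sign. First I would note that both problems are convex (convex quadratic objective, convex constraint set) with a nonempty, compact feasible region, so a minimizer exists; strict convexity of $\|\tilde{\bm\eta}_i - \bm\beta_i\|_2^2$ in $\bm\beta_i$ guarantees uniqueness of the optimal $\bm\beta$ in \eqref{eq:propprojection}, and a similar argument handles \eqref{eq:signprojection}. The heart of the argument is then to characterize the solution of \eqref{eq:signprojection} and check it coincides with the solution of \eqref{eq:propprojection}.

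The key structural fact I would use is that the objective $\frac1n\sum_i\|\bm\eta_i-\bm\beta_i\|_2^2$ decomposes as a sum over the $np$ scalar terms $(\eta_{ij}-\beta_{ij})^2$, and the $\ell_1$-constraint couples them only through the single scalar $\frac1n\sum_i\|\bm\beta_i\|_1$. Fix a Lagrange multiplier / KKT threshold $\theta\ge0$ for that constraint (as in Algorithm~\ref{alg:projection}); for that fixed $\theta$ the problem decouples completely across $(i,j)$. For a coordinate $j\notin\mathcal P\cup\mathcal N$ the scalar subproblem is unconstrained soft-thresholding, giving $\beta_{ij}=\operatorname{sign}(\eta_{ij})\max(|\eta_{ij}|-\theta,0)$. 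For $j\in\mathcal P$ the subproblem is $\min_{\beta\ge0}(\eta_{ij}-\beta)^2+2\theta|\beta| = \min_{\beta\ge0}(\eta_{ij}-\beta)^2+2\theta\beta$; if $\eta_{ij}\le0$ the minimizer is $\beta=0$, and if $\eta_{ij}>0$ it is $\max(\eta_{ij}-\theta,0)$ — exactly the soft-threshold of $\tilde\eta_{ij}$. The case $j\in\mathcal N$ is symmetric. Hence, for every fixed $\theta$, the coordinatewise minimizer of the sign-constrained problem equals the coordinatewise soft-threshold (at level $\theta$) of $\tilde{\bm\eta}_i$, which is precisely the form of the solution Algorithm~\ref{alg:projection} produces for \eqref{eq:propprojection}.

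It remains to verify that the \emph{same} threshold $\theta$ is selected in both problems, i.e. that the constraint-binding condition matches. Here I would argue: if the unconstrained ($\theta=0$) solution of \eqref{eq:propprojection}, namely $\tilde{\bm\eta}$ itself, already satisfies $\frac1n\sum_i\|\tilde{\bm\eta}_i\|_1\le\lambda$, then $\theta=0$ and one checks directly that the componentwise-clipped $\tilde{\bm\eta}$ is also optimal for \eqref{eq:signprojection} (clipping a coordinate to $0$ can only decrease the $\ell_1$-norm and strictly decreases the objective against a sign-infeasible $\eta_{ij}$, so feasibility and optimality both hold). Otherwise the constraint binds in both problems, and by strict convexity the optimal value of $\frac1n\sum_i\|\bm\beta_i\|_1$ equals $\lambda$ in each; since the map $\theta\mapsto\frac1n\sum_i\|\,\text{soft}_\theta(\tilde{\bm\eta}_i)\|_1$ is continuous and strictly decreasing on the relevant range, there is a unique $\theta$ achieving $\lambda$, and it is the same for both problems because both have the soft-thresholded-$\tilde{\bm\eta}$ form. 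Combining, the unique minimizers coincide.

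The main obstacle I anticipate is the bookkeeping around the KKT conditions for the sign-constrained scalar subproblems — in particular, arguing cleanly that introducing the extra multipliers for $\beta_{ij}\ge0$ (resp. $\le0$) does not alter the shared multiplier $\theta$ of the $\ell_1$-constraint, but instead simply ``absorbs'' the wrong-sign part of $\eta_{ij}$ into a zero. A tidy way to sidestep heavy KKT manipulation is the reduction argument sketched above: show directly that any feasible point of \eqref{eq:signprojection} has objective value at least that of the corresponding point with each wrong-sign $\eta_{ij}$ replaced by $\tilde\eta_{ij}=0$ (since for $j\in\mathcal P$ and $\beta_{ij}\ge0$ we have $(\eta_{ij}-\beta_{ij})^2\ge\beta_{ij}^2=(\tilde\eta_{ij}-\beta_{ij})^2$ when $\eta_{ij}<0$), so the two problems have the same optimal value and, by uniqueness, the same optimal solution. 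That inequality, applied coordinatewise, is essentially the whole proof; the rest is verifying feasibility is preserved (immediate, since $\tilde{\bm\eta}$ differs from $\bm\eta$ only by zeroing entries, and the $\ell_1$-constraint is on $\bm\beta$, not $\bm\eta$) and that the solution of \eqref{eq:propprojection} is automatically sign-feasible (because soft-thresholding preserves sign and $\tilde\eta_{ij}\ge0$ for $j\in\mathcal P$, $\le0$ for $j\in\mathcal N$).
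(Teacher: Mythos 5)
Your fallback ``reduction'' argument is essentially the paper's proof: the coordinatewise inequality $(\eta_{ij}-\beta_{ij})^2 = \beta_{ij}^2 + \eta_{ij}^2 - 2\eta_{ij}\beta_{ij} \geq \beta_{ij}^2 + \eta_{ij}^2$ for sign-feasible wrong-sign coordinates is exactly the computation in the paper's Lemma~\ref{lemma:sign}, which shows any optimal solution of \eqref{eq:signprojection} must have $\beta_{ij}=0$ there, after which the two problems coincide on the remaining coordinates. Your primary route via KKT multipliers and matching the soft-threshold level $\theta$ across the two problems also works, but it is considerably heavier (it needs strong duality/Slater, the explicit soft-thresholding characterization from \citet{Duchi2008}, and monotonicity of $\theta\mapsto\frac1n\sum_i\|\operatorname{soft}_\theta(\tilde{\bm\eta}_i)\|_1$), whereas the perturbation argument needs none of that. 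One small correction to the reduction as you phrased it: the two problems do \emph{not} have the same optimal value --- for sign-feasible $\bm\beta$ the objectives satisfy $\frac1n\sum_i\|\bm\eta_i-\bm\beta_i\|_2^2 \geq \frac1n\sum_i\|\tilde{\bm\eta}_i-\bm\beta_i\|_2^2 + \frac1n\sum_{\text{wrong }(i,j)}\eta_{ij}^2$, with equality exactly when $\beta_{ij}=0$ at all wrong-sign coordinates, so the optimal values differ by that additive constant while the minimizers coincide (which is all the proposition claims). You should also make explicit, as you note in passing, that the minimizer of \eqref{eq:propprojection} satisfies $\tilde\eta_{ij}\beta_{ij}\geq0$ and vanishes wherever $\tilde\eta_{ij}=0$, so that it is both sign-feasible and attains the equality case; with that, the argument is complete.
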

The proof of this proposition requires the following lemma.
\begin{lemma}
\label{lemma:sign}
A solution $\bm{\beta}_1,\ldots,\bm{\beta}_n$ to \eqref{eq:signprojection} must satisfy the inequality $\eta_{ij}\beta_{ij}\geq0$ for all $i$ and $j$.
\end{lemma}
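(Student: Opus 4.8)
The plan is to argue by contradiction: suppose an optimal solution $\bm{\beta}_1,\ldots,\bm{\beta}_n$ to \eqref{eq:signprojection} has $\eta_{ij}\beta_{ij}<0$ for some pair $(i,j)$, and exhibit a strictly better feasible point. The intuition is that a coefficient $\beta_{ij}$ with the ``wrong'' sign relative to $\eta_{ij}$ is hurting on two fronts at once: it increases the squared error $\|\bm{\eta}_i-\bm{\beta}_i\|_2^2$ (since moving $\beta_{ij}$ toward $0$, and then toward $\eta_{ij}$, decreases the distance), and it contributes $|\beta_{ij}|$ to the $\ell_1$-budget without buying anything. So the natural perturbation is to replace $\beta_{ij}$ by $0$ (or, more aggressively, by a value with the same sign as $\eta_{ij}$).

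The key steps, in order, would be: (1) Fix a violating pair $(i,j)$, so $\eta_{ij}$ and $\beta_{ij}$ are nonzero and of opposite sign. (2) Define $\bm{\beta}_i'$ equal to $\bm{\beta}_i$ except with the $j$th entry set to $0$, and leave all other $\bm{\beta}_{i'}$ ($i'\neq i$) unchanged. (3) Check feasibility: the sign constraints on coordinates in $\mathcal{P}$ and $\mathcal{N}$ are still satisfied (zero satisfies both $\geq 0$ and $\leq 0$), and the $\ell_1$-budget can only have decreased, since $\frac{1}{n}\sum_i\|\bm{\beta}_i'\|_1 = \frac{1}{n}\sum_i\|\bm{\beta}_i\|_1 - \frac{1}{n}|\beta_{ij}| \leq \lambda$. (4) Compare objectives: only the $i$th term changes, and $\|\bm{\eta}_i-\bm{\beta}_i'\|_2^2 - \|\bm{\eta}_i-\bm{\beta}_i\|_2^2 = \eta_{ij}^2 - (\eta_{ij}-\beta_{ij})^2 = 2\eta_{ij}\beta_{ij} - \beta_{ij}^2$. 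Since $\eta_{ij}\beta_{ij}<0$ and $\beta_{ij}^2>0$, this difference is strictly negative, so $\bm{\beta}_1,\ldots,\bm{\beta}_{i}',\ldots,\bm{\beta}_n$ is strictly better, contradicting optimality. (5) Conclude that every optimal solution satisfies $\eta_{ij}\beta_{ij}\geq 0$ for all $i,j$.

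I do not anticipate a serious obstacle here — the argument is a short one-coordinate perturbation. The one point requiring a little care is handling coordinates $j\notin\mathcal{P}\cup\mathcal{N}$ (unconstrained coordinates): a priori the inequality $\eta_{ij}\beta_{ij}\geq 0$ is claimed for \emph{all} $j$, not just the sign-constrained ones, and indeed the perturbation argument above never used $j\in\mathcal{P}\cup\mathcal{N}$, so it goes through verbatim for unconstrained coordinates too. A second minor subtlety is that setting $\beta_{ij}$ to $0$ rather than to $\eta_{ij}$ is the ``safe'' perturbation — moving all the way to $\eta_{ij}$ would be even better for the squared error but might violate the $\ell_1$-budget, whereas moving to $0$ strictly relaxes it; since we only need \emph{some} strict improvement, zeroing out suffices and keeps the feasibility check trivial. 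If one wants the conclusion phrased without ``for all optimal solutions,'' note that the statement as written in Lemma~\ref{lemma:sign} is exactly ``a solution $\ldots$ must satisfy,'' which is what the contradiction delivers.
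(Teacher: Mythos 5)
Your proposal is correct and follows essentially the same argument as the paper: a proof by contradiction that zeroes out the offending coordinate, verifies feasibility (both the sign constraints and the relaxed $\ell_1$-budget), and computes the strict decrease in the objective, $2\eta_{ij}\beta_{ij}-\beta_{ij}^2<0$, which matches the paper's computation up to sign convention. The paper likewise attributes the idea to Lemma 3 of Duchi et al.\ (2008) and, like you, never needs $j\in\mathcal{P}\cup\mathcal{N}$ for this step.
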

\begin{proof}
We proceed using proof by contradiction along the lines of Lemma 3 in \citet{Duchi2008}. Suppose there exists a solution $\bm{\beta}_1,\ldots,\bm{\beta}_n$ such that $\eta_{ij}\beta_{ij}<0$ for some $i$ and $j$. Take $\bm{\beta}_1^\star,\ldots,\bm{\beta}_n^\star$ equal to $\bm{\beta}_1,\ldots,\bm{\beta}_n$ except at index $(i,j)$, where we set $\beta_{ij}^\star=0$. Note that $\bm{\beta}_1^\star,\ldots,\bm{\beta}_n^\star$ continues to satisfy the sign constraints and $\ell_1$-constraint, and hence remains feasible for \eqref{eq:signprojection}. We also have that
\begin{equation*}
\sum_{i=1}^n\|\bm{\eta}_i-\bm{\beta}_i\|_2^2-\sum_{i=1}^n\|\bm{\eta}_i-\bm{\beta}_i^\star\|_2^2=\beta_{ij}^2-2\eta_{ij}\beta_{ij}>\beta_{ij}^2>0.
\end{equation*}
Thus, $\bm{\beta}_1^\star,\ldots,\bm{\beta}_n^\star$ attains a lower objective value than the solution $\bm{\beta}_1,\ldots,\bm{\beta}_n$. This contradiction yields the statement of the lemma.
\end{proof}
The proof of Proposition \ref{prop:simpleprojection} now follows.
\begin{proof}
If $\eta_{ij}<0$ and $j\in\mathcal{P}$, then by Lemma \ref{lemma:sign}, a solution to \eqref{eq:signprojection} must satisfy $\beta_{ij}=0$. Likewise, if $\eta_{ij}>0$ and $j\in\mathcal{N}$, then it must hold that $\beta_{ij}=0$. Hence, by setting any $\eta_{ij}$ that violate the sign constraints to zero (i.e., $\tilde{\eta}_{ij}$), and noting that a solution to \eqref{eq:propprojection} must satisfy $\beta_{ij}=0$ when $\tilde{\eta}_{ij}=0$, we arrive at the result of the proposition.
\end{proof}

\section{Pathwise optimization}
\label{app:pathwise}

In a spirit similar to \citet{Friedman2007}, we take the sequence of regularization parameters $\{\lambda^{(t)}\}_{t=1}^T$ as a grid of values that yields a path between the unregularized model (no sparsity) and the fully regularized model (all coefficients zero). Specifically, we set $\lambda^{(1)}$ such that the contextual lasso regularizer does not impart any regularization, i.e., $\lambda^{(1)}=n^{-1}\sum_{i=1}^n\|\bm{\beta}_{\hat{\mathbf{w}}^{(1)}}(\mathbf{z}_i)\|_1$, where the weights $\hat{\mathbf{w}}^{(1)}$ are a solution to \eqref{eq:samplenet} from setting $\lambda=\infty$. We then construct the sequence as a grid of linearly spaced values between $\lambda^{(1)}$ and $\lambda^{(T)}=0$, the latter forcing all coefficients to zero. Linear spacing of the sequence of $\lambda^{(t)}$ generally yields linearly spaced sparsity levels. The sequence should be decreasing so the optimizer can build on networks that increase in sparsity.

Algorithm~\ref{alg:pathwise} summarizes the complete pathwise optimization process, with gradient descent employed as the optimizer.
\begin{algorithm}[ht]
\caption{Pathwise optimization}
\small
\label{alg:pathwise}
\begin{algorithmic}
\STATE \textbf{input} Initial weights $\hat{\mathbf{w}}^{(0)}$, step size $\alpha$, and number of regularization parameters $T$
\STATE Initialize $\lambda^{(1)}=\infty$
\FOR{$t=1,\ldots,T$}
\STATE Initialize $\mathbf{w}_{(0)}=\hat{\mathbf{w}}^{(t-1)}$
\STATE Initialize $m=0$
\WHILE{Not converged}
\STATE Update $\mathbf{w}_{(m+1)}=\mathbf{w}_{(m)}-\alpha\cdot\nabla_\mathbf{w} L(\mathbf{w}_{(m)};\lambda^{(t)})$
\STATE Update $m=m+1$
\ENDWHILE
\STATE Set $\hat{\mathbf{w}}^{(t)}=\mathbf{w}_{(m)}$
\IF{$t=1$}
\STATE Set $\lambda^{(1)}=n^{-1}\sum_{i=1}^n\|\bm{\beta}_{\hat{\mathbf{w}}^{(1)}}(\mathbf{z}_i)\|_1$ and $\lambda^{(T)}=0$
\STATE Equispace $\lambda^{(2)},\ldots,\lambda^{(T-1)}$ between $\lambda^{(1)}$ and $\lambda^{(T)}$
\ENDIF
\ENDFOR
\STATE \textbf{output} Fitted weights $\hat{\mathbf{w}}^{(1)},\ldots,\hat{\mathbf{w}}^{(T)}$
\end{algorithmic}
\end{algorithm}
 To parse the notation used in the algorithm, $L(\mathbf{w};\lambda)=n^{-1}\sum_{i=1}^nl(\mathbf{x}_i^\top\bm{\beta}_\mathbf{w}(\mathbf{z}_i),y_i)$ is the loss as a function of the network's weights $\mathbf{w}$ given $\lambda$, and $\nabla_\mathbf{w} L(\mathbf{w};\lambda)$ is its gradient.

\section{Relaxed fit}
\label{app:relaxed}

Denote by $\hat{\bm{\beta}}_\lambda(\mathbf{z})$ a contextual lasso network fit with regularization parameter $\lambda$. To unwind bias in $\hat{\bm{\beta}}_\lambda(\mathbf{z})$, we train a polished network $\bm{\beta}_\lambda^p(\mathbf{z})$ that selects the same explanatory features but does not impose any shrinkage. For this task, we introduce the function $\hat{\mathbf{s}}_\lambda(\mathbf{z}):\mathbb{R}^m\to\{0,1\}^p$ that outputs a vector with elements equal to one wherever $\hat{\bm{\beta}}_\lambda(\mathbf{z})$ is nonzero and elsewhere is zero. We then fit the polished network as $\bm{\beta}_\lambda^p(\mathbf{z})=\bm{\eta}(\mathbf{z})\circ\hat{\mathbf{s}}_\lambda(\mathbf{z})$, where $\circ$ means element-wise multiplication and $\bm{\eta}(\mathbf{z})$ is the same architecture as used for the original contextual lasso network before the projection layer. The effect of including $\hat{\mathbf{s}}_\lambda(\mathbf{z})$, which is fixed when training $\bm{\beta}_\lambda^p(\mathbf{z})$, is twofold. First, it guarantees the coefficients from the polished network are nonzero in the same positions as the original network, i.e., the same features are selected. Second, it ensures explanatory features only contribute to gradients for observations in which they are active, i.e., $x_{ij}$ does not contribute if the $j$th component of $\hat{\mathbf{s}}_\lambda(\mathbf{z}_i)$ is zero. Because the polished network does not project onto an $\ell_1$-ball, its coefficients are not shrunk.

To arrive at the relaxed contextual lasso fit, we combine $\hat{\bm{\beta}}_\lambda(\mathbf{z})$ and the fitted polished network $\hat{\bm{\beta}}_\lambda^p(\mathbf{z})$:
\begin{equation}
\label{eq:relax}
\hat{\bm{\beta}}_{\lambda,\gamma}(\mathbf{z}):=(1-\gamma)\hat{\bm{\beta}}_\lambda(\mathbf{z})+\gamma\hat{\bm{\beta}}_\lambda^p(\mathbf{z}),\quad0\leq\gamma\leq1.
\end{equation}
When $\gamma=0$, we recover the original biased coefficients, and when $\gamma=1$, we attain the unbiased polished coefficients. Between these extremes lies a continuum of relaxed coefficients with varying degrees of bias. Since the original and polished networks need only be computed once, we may consider any relaxation on this continuum at virtually no computational expense over and above that of the two networks. In practice, we choose among the possibilities by tuning $\gamma$ on a validation set.

To illustrate the benefits of relaxation, we present Figure~\ref{fig:ablation}, which compares the relaxed and nonrelaxed variants of the contextual lasso under the synthetic experimental design of Section~\ref{sec:experiments}.
\begin{figure}[ht]
\centering
\centerline{\includegraphics[width=\linewidth]{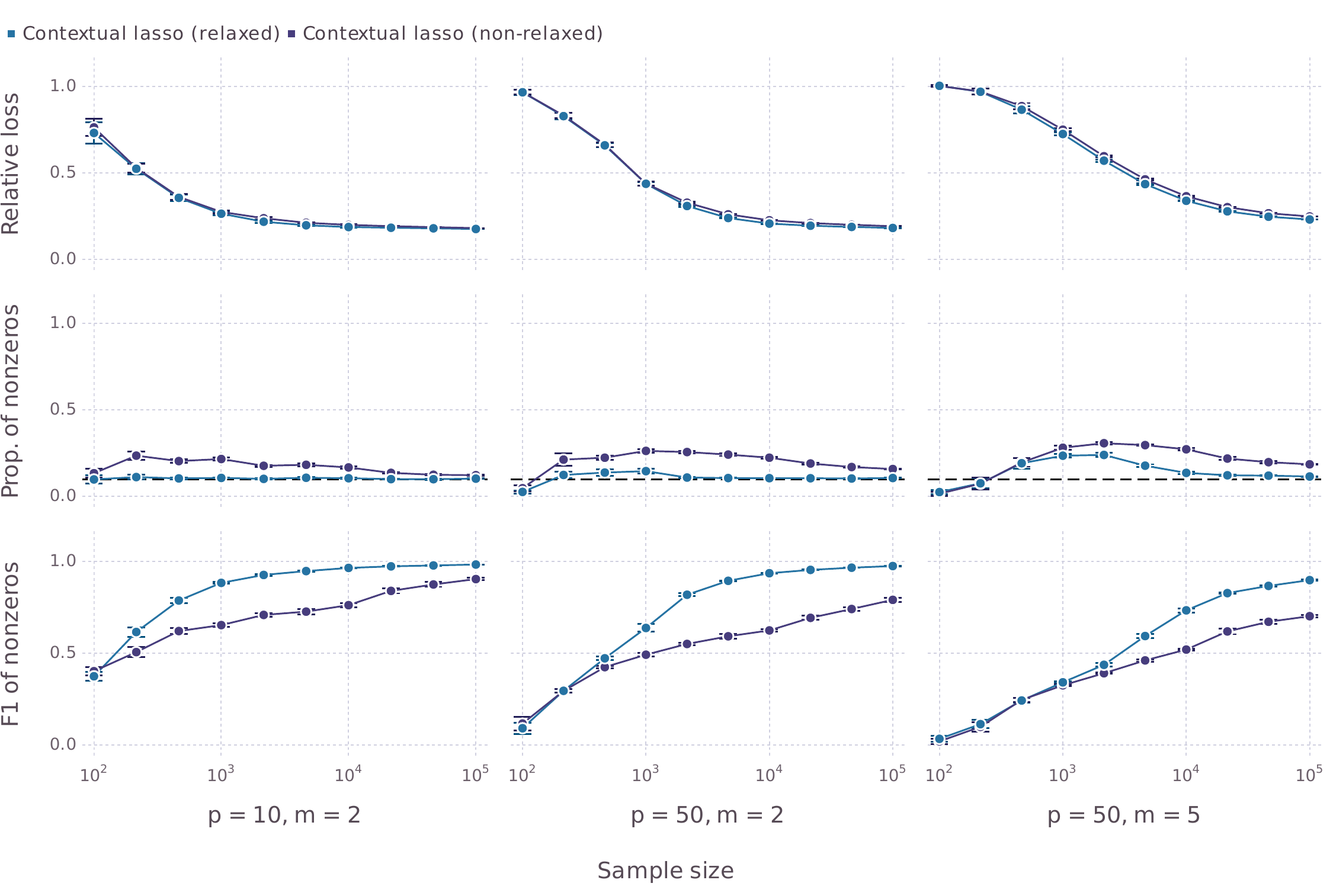}}
\caption{Relaxation ablation on synthetic regression data. Metrics are aggregated over 10 synthetic datasets. Solid points are averages and error bars are standard errors. Dashed horizontal lines in the middle row indicate the true sparsity level.}
\label{fig:ablation}
\end{figure}
 As far as prediction accuracy is concerned, the benefits of relaxation are marginal. However, for selection and interpretation, the story is quite different. The relaxation yields models that are both sparser and contain more true positives and/or fewer false positives. These gains are most pronounced in larger samples. In smaller samples, relaxation is less beneficial because the bias from shrinkage helps stabilize the models. Yet, the relaxed variant of the contextual lasso typically does no worse than its nonrelaxed counterpart because it adapts to the best level of bias (shrinkage) by tuning $\gamma$.

\section{Run times}
\label{app:complexity}

The complexity analysis in Section~\ref{sec:computational} suggests that the training time for the contextual lasso should be linear in the sample size $n$ and number of explanatory features $p$. To verify this result, we record the time taken to fit the contextual lasso over a sequence of 50 values of $\lambda$. The number of hidden layers is three and the number of neurons per layer is 100. Figure~\ref{fig:timings} reports the results.
\begin{figure}[ht]
\centering
\includegraphics[width=0.49\linewidth]{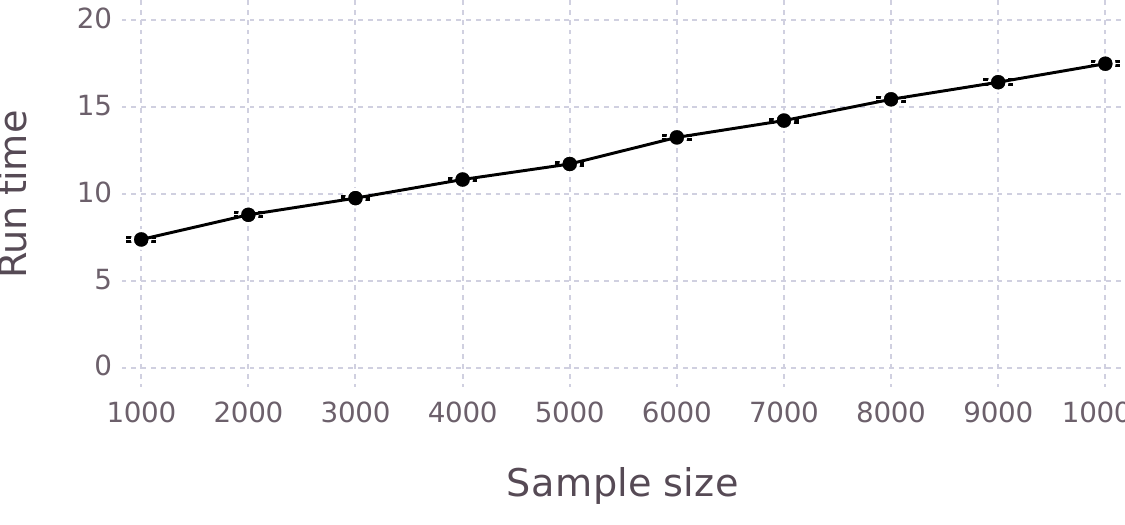}
\includegraphics[width=0.49\linewidth]{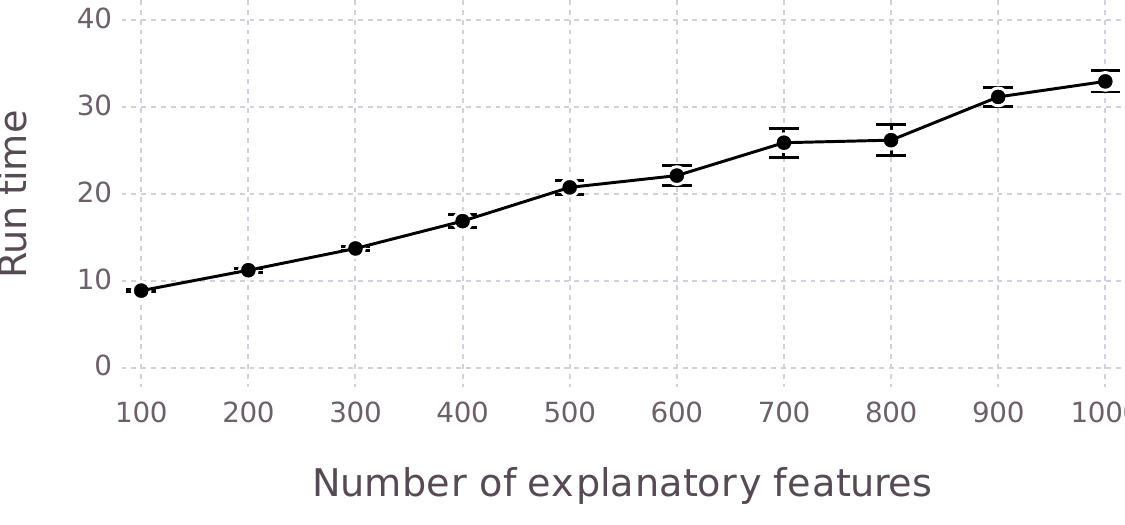}
\caption{Run time in seconds to fit the contextual lasso over a sequence of 50 values of $\lambda$, measured over 10 synthetic datasets. The number of explanatory features $p=50$ in the left plot and the sample size $n=1,000$ in the right plot. The number of contextual features $m=5$. Solid points are averages and error bars are one standard errors.}
\label{fig:timings}
\end{figure}
The run times are indeed linear as a function of $n$ and $p$ and, overall, quite reasonable for real world applications.

\section{Localized lasso}
\label{app:localized}

We compare the contextual lasso with the localized lasso of \citet{Yamada2017}. The graph information required by the localized lasso is estimated from the contextual features using the nearest neighbors approach of \citet{Yamada2017}. We focus on smaller sample sizes than in the main experiments since each observation requires a new coefficient vector that each constitute additional optimization variables. Figure \ref{fig:localized} reports the results for regression (the localized lasso does not support classification).
\begin{figure}[t]
\centering
\centerline{\includegraphics[width=\linewidth]{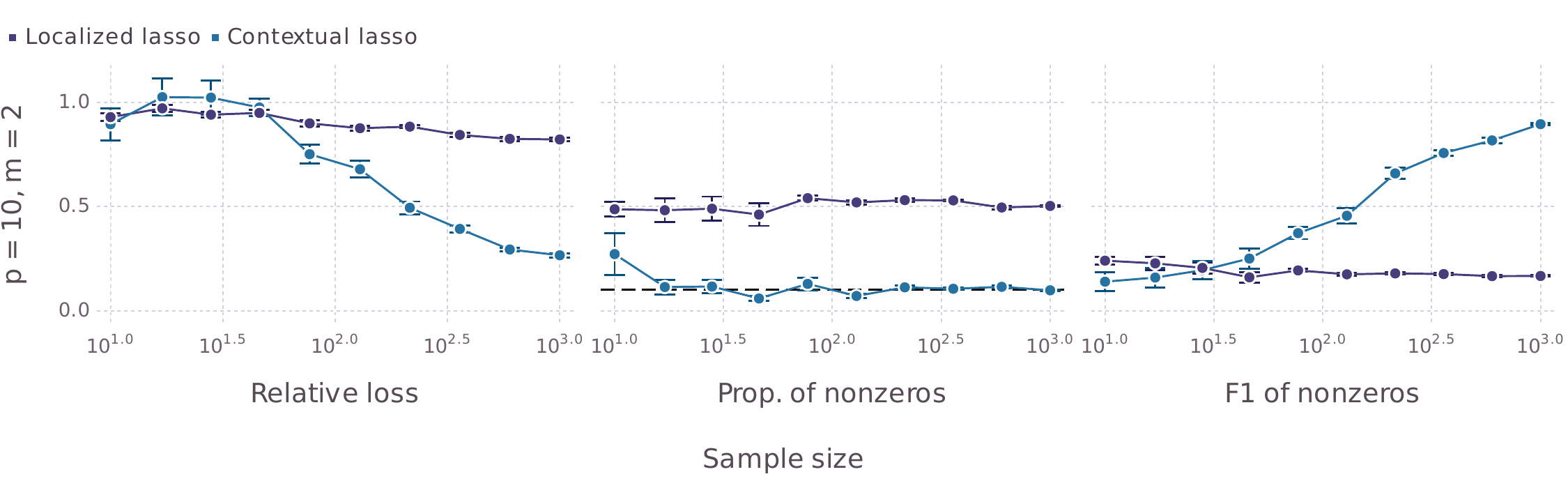}}
\caption{Comparisons with the localized lasso on synthetic regression data. Metrics are aggregated over 10 synthetic datasets. Solid points are averages and error bars are standard errors. The dashed horizontal line in the middle indicates the true sparsity level.}
\label{fig:localized}
\end{figure}
The localized lasso's prediction loss improves with growing $n$, but not nearly as well as the contextual lasso. The graph information may be insufficient to encode the underlying nonlinearity. Furthermore, the localized lasso's regularizer never induces fully sparse coefficient vectors (all zeros), which may be limiting if there are no relevant explanatory features for certain $\mathbf{z}$.
 
\section{Implementation details}
\label{app:implementation}

The contextual lasso is fit using our \texttt{Julia} package \texttt{ContextualLasso}. The network is configured with three hidden layers. The number of neurons, which are spread equally across these hidden layers, is set so that the dimensionality of the weights $\mathbf{w}$ is approximately $32\times p\times m$. This setting ensures the network size scales roughly linearly with the number of features. The contextual linear model uses the same architecture, excluding the projection layer. The deep neural network is set up similarly. These methods all use rectified linear activation functions in the hidden layers and are optimized using Adam \citep{Kingma2015} with a learning rate of 0.001. Convergence is monitored on a validation set with the optimizer terminated after 30 iterations without improvement.

The lasso is fit using the \texttt{Julia} package \texttt{GLMNet} \citep{Friedman2010}. The group lasso is fit using the \texttt{R} package \texttt{grpreg} \citep{Breheny2015}. Since contextual features are always relevant, the regularizer for the lasso is applied only to explanatory features and interactions, not the contextual features.\footnote{\texttt{grpreg} does not provide support for this functionality.} The (group) lasso and contextual lasso all allow for relaxed fits, as discussed in Section~\ref{sec:relaxed}. The regularization parameter $\lambda$ is swept over a grid of 50 values computed automatically from the data using \texttt{ContextualLasso}, \texttt{GLMNet}, or \texttt{grpreg}. For each value of $\lambda$, the relaxation parameter $\gamma$ is swept over the grid $\{0,0.1,...,1\}$.

The lassonet is fit using the \texttt{Python} package \texttt{lassonet} \citep{Lemhadri2021a}, which also performs its own relaxation. LLSPIN is fit using the authors' \texttt{Python} implementation \citep[see][]{Yang2022}. Their implementation relies on the hyperparameter optimization framework \texttt{Optuna} \citep{Akiba2019} to determine the regularization parameter and learning rate. We use the default grid for tuning the learning rate, but increase the maximum regularization parameter to 10, which is roughly the smallest value required to achieve a fully sparse solution in our experiments. LLSPIN does not shrink and so does not admit a relaxation. Lassonet and LLSPIN use the same convergence criterion, number of regularization parameters, and network configuration as the other deep learning methods.

For all methods, the input features are standardized prior to training. Standardization of the explanatory features is necessary for the lasso estimators as it places all coefficients on the same scale, ensuring equitable regularization. \texttt{ContextualLasso} automates standardization and expresses all final coefficients on their original scale.

All experiments are run on a Linux platform with NVIDIA RTX 4090 GPUs.

\section{Synthetic data generation}
\label{app:synthetic}

The explanatory features $\mathbf{x}_1,\ldots,\mathbf{x}_n$ are generated iid as $p$-dimensional $N(\mathbf{0},\bm{\Sigma})$ random variables, where the covariance matrix $\bm{\Sigma}$ has elements $\Sigma_{ij}=0.5^{|i-j|}$. The contextual features $\mathbf{z}_1,\ldots,\mathbf{z}_n$ are generated iid as $m$-dimensional random variables uniform on $[-1,1]^m$, independent of the $\mathbf{x}_i$. With the features drawn, we simulate a regression response as
\begin{equation*}
y_i\sim N(\mu_i,1),\quad\mu_i=\kappa\cdot\mathbf{x}_i^\top\bm{\beta}(\mathbf{z}_i),
\end{equation*}
or a classification response as
\begin{equation*}
y_i\sim \operatorname{Bernoulli}(p_i),\quad p_i=\frac{1}{1+\exp\left(-\kappa\cdot\mathbf{x}_i^\top\bm{\beta}(\mathbf{z}_i)\right)},
\end{equation*}
for $i=1,\ldots,n$. Here, $\kappa>0$ controls the signal strength vis-à-vis the variance of $\kappa\cdot\mathbf{x}_i^\top\bm{\beta}(\mathbf{z}_i)$. We first estimate the variance of $\mathbf{x}_i^\top\bm{\beta}(\mathbf{z}_i)$ on the training set and then set $\kappa$ so the variance of the signal is five. The coefficient function $\bm{\beta}(\mathbf{z}):=\bigl(\beta_1(\mathbf{z}),\ldots,\beta_p(\mathbf{z})\bigr)^\top$ is constructed such that $\beta_j(\mathbf{z}_i)$ maps to a nonzero value whenever $\mathbf{z}_i$ lies within a hypersphere of radius $r_j$ centered at $\mathbf{c}_j$:
\begin{equation}
\label{eq:coeffun}
\beta_j(\mathbf{z}_i)=
\begin{cases}
1-\frac{1}{2r_j}\|\mathbf{z}_i-\mathbf{c}_j\|_2 & \text{if }\|\mathbf{z}_i-\mathbf{c}_j\|_2\leq r_j \\
0 & \text{otherwise}
\end{cases}.
\end{equation}
This function attains the maximal value one when $\mathbf{z}_i=\mathbf{c}_j$ and the minimal value zero when $\|\mathbf{z}_i-\mathbf{c}_j\|_2>r_j$. The centers $\mathbf{c}_1,\ldots,\mathbf{c}_p$ are generated with uniform probability on $[-1,1]^p$, and the radii $r_1,\ldots,r_p$ are chosen to achieve sparsity levels that vary between 0.05 and 0.15 (average 0.10). Figure \ref{fig:demo} provides a visual illustration.
\begin{figure}[t]
\centerline{\includegraphics[width=0.555\linewidth]{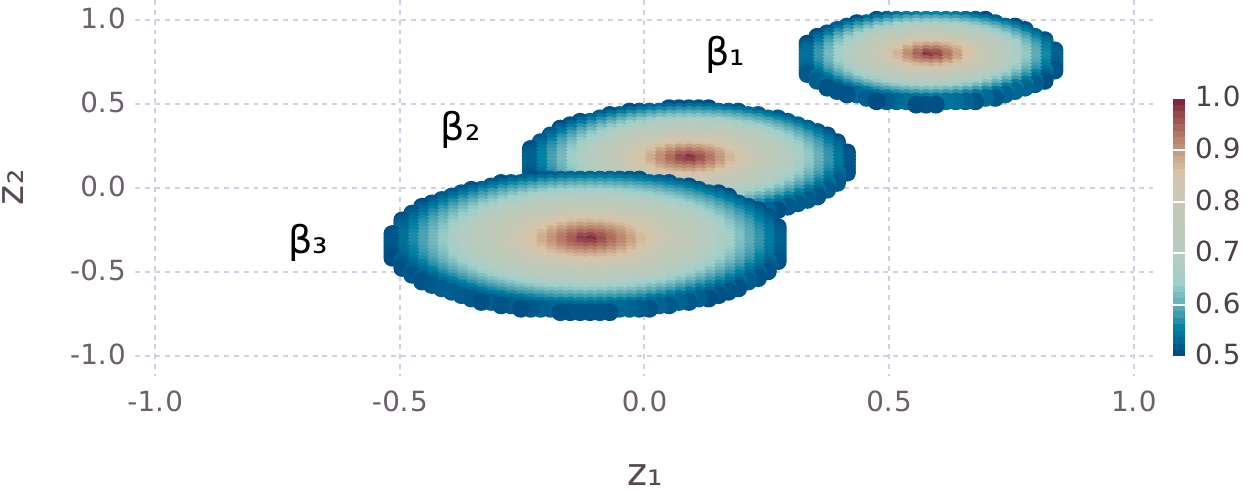}}
\caption{Illustration of coefficient function \eqref{eq:coeffun} for $p=3$ explanatory features and $m=2$ contextual features. The centers $\mathbf{c}_j$ correspond to the dark red in the middle of each sphere.}
\label{fig:demo}
\end{figure}
This function is inspired by the house pricing example in Figure \ref{fig:house}, where the coefficients are nonzero in some central region of the contextual feature space.

\section{Classification results}
\label{app:classification}

\subsection{Synthetic data}

Figure~\ref{fig:classification} reports the experimental results for classification on synthetic data, analogous to those for regression in Section~\ref{sec:experiments}.
\begin{figure}[ht]
\centering
\centerline{\includegraphics[width=\linewidth]{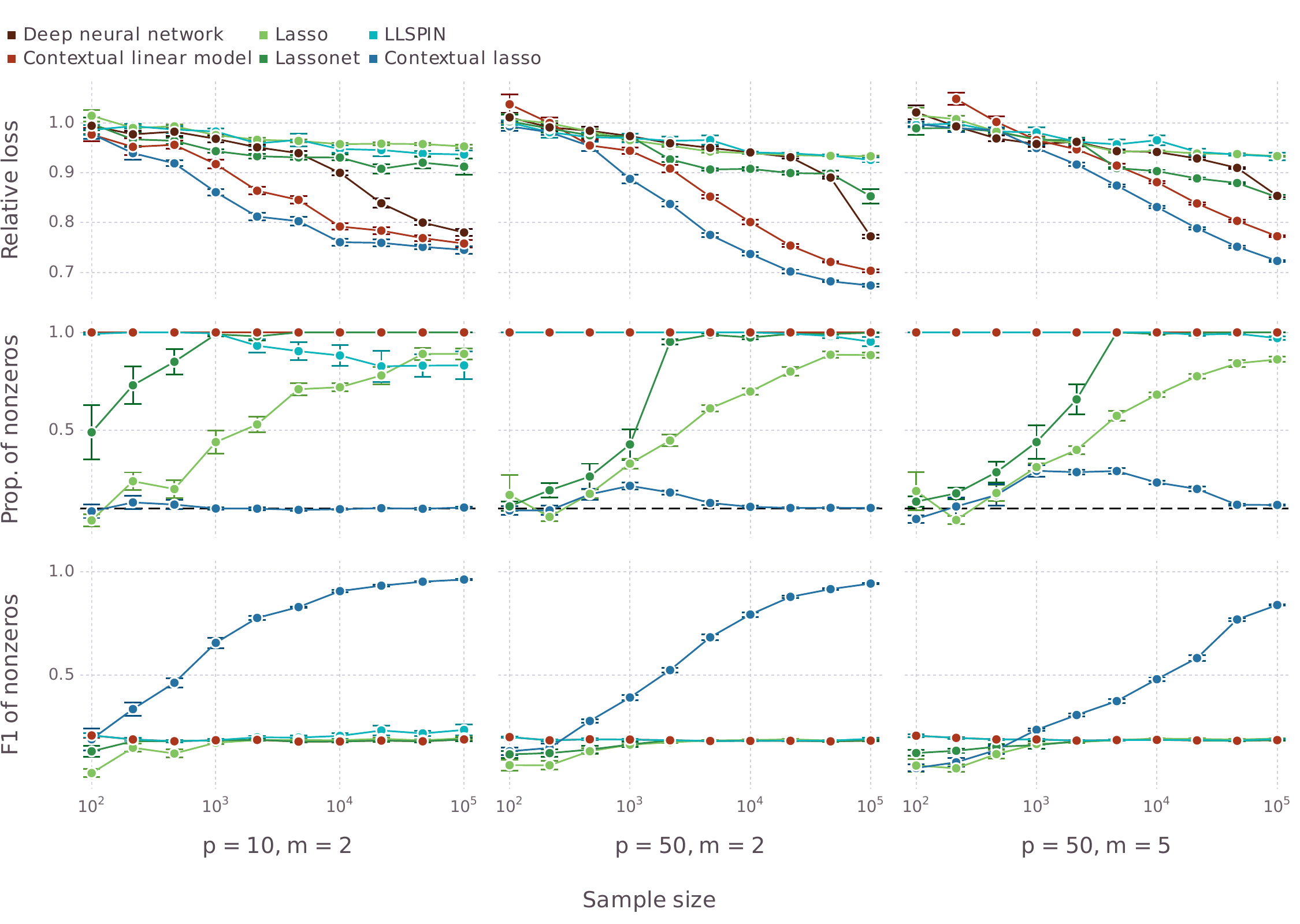}}
\caption{Comparisons on synthetic classification data. Metrics are aggregated over 10 synthetic datasets. Solid points are averages and error bars are standard errors. Dashed horizontal lines in the middle row indicate the true sparsity level.}
\label{fig:classification}
\end{figure}
The findings are broadly in line with the regression ones. The contextual lasso performs best overall and is the only method ever able to recover the true nonzeros accurately.

\subsection{News popularity data}

We turn to a real dataset of articles posted to the news platform Mashable \citep{Fernandes2015}. The task is to predict if an article will be popular, defined in \citet{Fernandes2015} as more than 1400 shares. In addition to the zero-one response feature for popularity, the dataset has predictive features that quantify the articles (e.g., number of total words, positive words, and images). The data channel feature, which identifies the category of the article (lifestyle, entertainment, business, social media, technology, world, or viral), is taken as the contextual feature. It is expressed as a sequence of indicator variables yielding $m=6$ contextual features. There remain $p=51$ explanatory features.

Table~\ref{tab:news} reports the results over 10 random splits of the dataset's $n=39,643$ observations into training, validation, and testing sets in 0.6-0.2-0.2 proportions.
\begin{table}[ht]
\centering
\caption{Comparisons on the news popularity data. Metrics are aggregated over 10 random splits of the data. Averages and standard errors are reported.}
\label{tab:news}
\small
\begin{tabular}{lll}
\toprule
& Relative loss & Avg. sparsity \\
\midrule
Deep neural network & $0.903\pm0.003$ & $51.0\pm0.0$ \\
Contextual linear model & $0.906\pm0.003$ & $51.0\pm0.0$ \\
Lasso & $0.914\pm0.002$ & $22.4\pm0.7$ \\
Lassonet & $0.894\pm0.002$ & $50.7\pm0.3$ \\
LLSPIN & $0.923\pm0.009$ & $51.0\pm0.0$ \\
Contextual lasso & $0.906\pm0.002$ & $12.6\pm0.9$ \\
\bottomrule
\end{tabular}
\end{table}
In contrast to the previous datasets, all methods predict similarly well here. The lassonet performs marginally best overall, while LLSPIN performs marginally worst. Though predicting neither best nor worst, the contextual lasso retains a significant lead in terms of sparsity, being twice as sparse as the next sparsest method (lasso). Sparsity is crucial for this task as it allows the author or editor to focus on a small number of changes necessary to improve the article's likelihood of success. Other methods such as the uninterpretable deep neural network, or fully dense contextual linear model, are not nearly as useful for the same purpose.

\section{High-dimensional and fixed coefficient results}
\label{app:high}

Besides interpretability, a major appeal of the lasso is its good performance in high-dimensional regimes, where the number of features is comparable to the sample size. It is intriguing to consider whether the contextual lasso remains useful in this setting. To this end, we extend the experiments of Section~\ref{sec:experiments} to $p=1,000$ explanatory features. Typically, when there are so many features, only a small number are relevant for predicting the response, so we adjust the synthetic data generation process so that only 10 explanatory features are relevant for some values of the contextual features $\mathbf{z}$. The remaining 990 explanatory features remain irrelevant for all $\mathbf{z}$. Figure~\ref{fig:high} reports the results.
\begin{figure}[ht]
\centering
\centerline{\includegraphics[width=\linewidth]{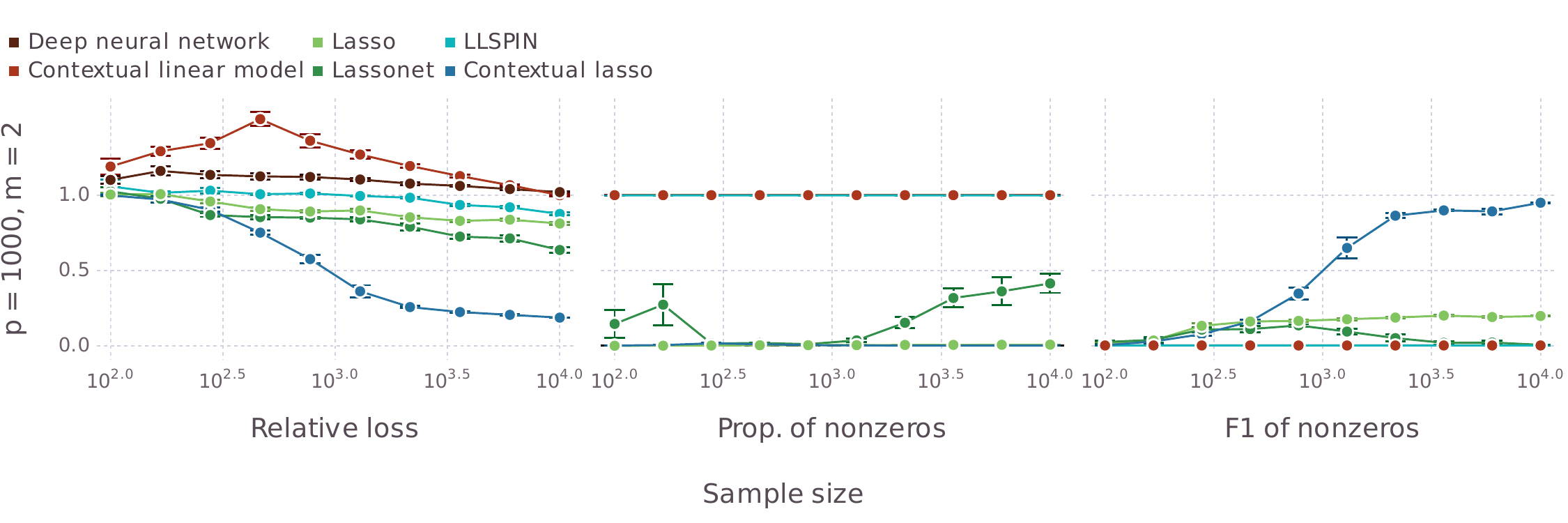}}
\caption{Comparisons on high-dimensional synthetic regression data. Metrics are aggregated over 10 synthetic datasets. Solid points are averages and error bars are standard errors. The dashed horizontal line in the middle indicates the true sparsity level.}
\label{fig:high}
\end{figure}
The contextual lasso performs highly competitively, even when $n<1,000$ and there are more explanatory features than observations. As with the lower-dimensional experiments, the contextual lasso can still filter out the irrelevant explanatory features and achieves near-perfect support recovery for large $n$.

A second regime that might arise in practice is where contextual features are present but have no effect on the explanatory features. That is, the explanatory features have a fixed sparsity pattern and fixed coefficients for all $\mathbf{z}$. Figure~\ref{fig:fixed} reports the results in this setting.
\begin{figure}[ht]
\centering
\centerline{\includegraphics[width=\linewidth]{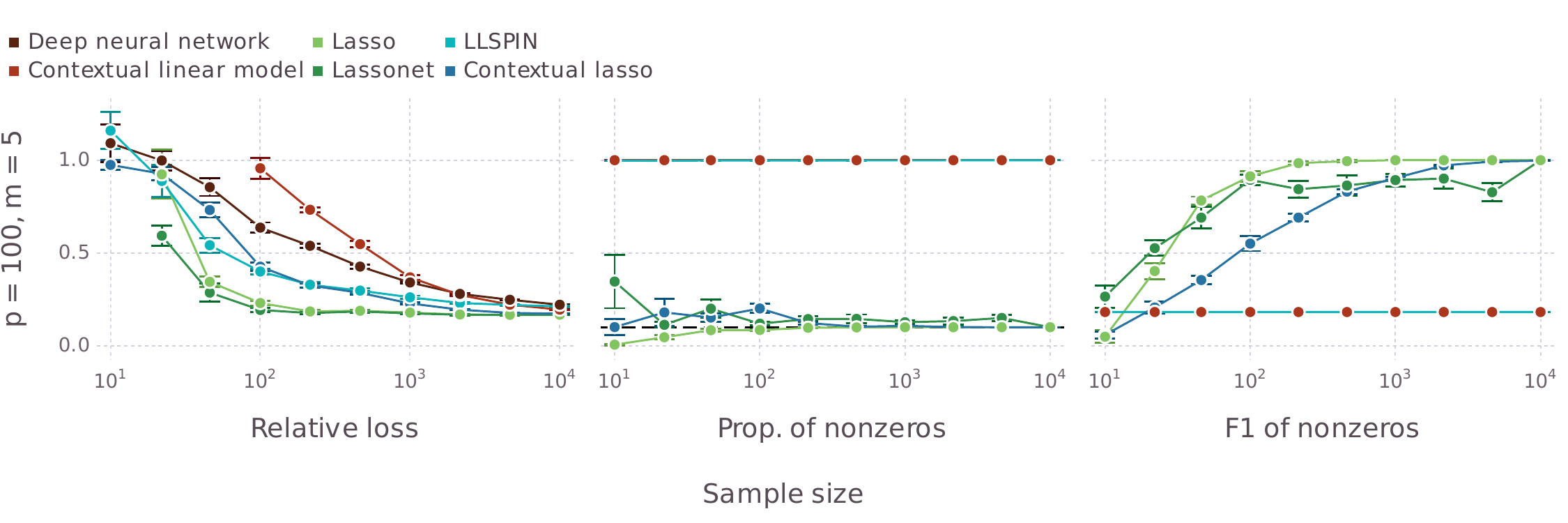}}
\caption{Comparisons on fixed coefficient synthetic regression data. Metrics are aggregated over 10 synthetic datasets. Solid points are averages and error bars are standard errors. The dashed horizontal line in the middle indicates the true sparsity level.}
\label{fig:fixed}
\end{figure}
Here, the contextual lasso is outperformed by the lasso (which is in its home territory) and the lassonet, both of which assume the sparsity pattern is fixed. Nonetheless, the contextual lasso is able to recover the true nonzeros by learning a constant function for $\bm{\beta}(\mathbf{z})$ via the bias term in each neuron. Provided the sample size is sufficiently large, it is reasonable to expect the contextual lasso to remain competitive with the lasso.

\section{Stability analysis}
\label{app:stability}

Since the contextual lasso involves a highly-parameterized neural network, it is insightful to consider its selection stability when trained using different random initializations of the network weights. To this end, we report Figure~\ref{fig:stability}, which shows selection instability as measured by the Hamming distance (scaled by $p$) between two independently initialized networks.
\begin{figure}[ht]
\centering
\centerline{\includegraphics[width=\linewidth]{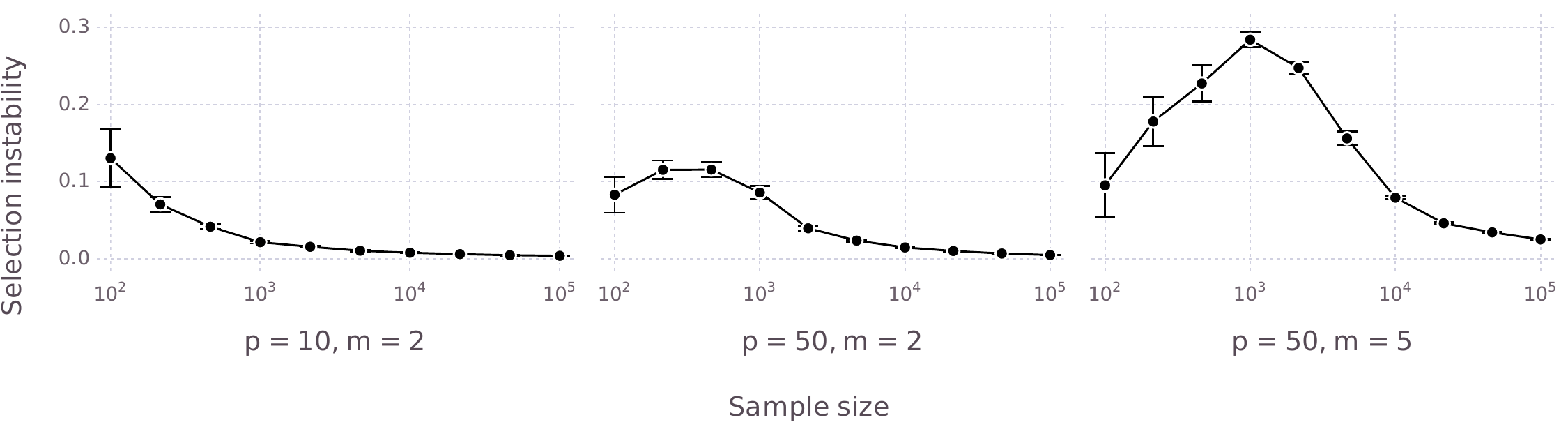}}
\caption{Selection instability of the contextual lasso over 10 synthetic datasets. Solid points represent averages and error bars denote standard errors.}
\label{fig:stability}
\end{figure}
Unsurprisingly, the statistically difficult regimes, where the sample size is small or the number of explanatory/contextual features is large, correspond to higher instability. In any case, the contextual lasso grows increasingly stable with the sample size. Even when $p=50$ and $m=5$, the contextual lasso is almost entirely stable by the time $n=10,000$, which is typical of our real data examples. The concave shape of the stability curve when $p=50$ is consistent with the proportion of nonzeros reported in Figure~\ref{fig:regression}, where the contextual lasso initially produces highly sparse models that are more restricted and hence more stable.

\section{Dataset availability}
\label{app:datasets}

The datasets used throughout this paper are publicly available at the following URLs.
\begin{itemize}
\item House pricing data in Section~\ref{sec:introduction}: \\ \url{https://www.kaggle.com/datasets/ruiqurm/lianjia}.
\item Energy consumption data in Section~\ref{sec:experiments}: \\ \url{https://archive.ics.uci.edu/ml/datasets/Appliances+energy+prediction}.
\item Parkinson's telemonitoring data in Section~\ref{sec:experiments}: \\ \url{https://archive.ics.uci.edu/ml/datasets/parkinsons+telemonitoring}.
\item News popularity data in Appendix~\ref{app:classification}: \\ \url{https://archive.ics.uci.edu/ml/datasets/online+news+popularity}.
\end{itemize}

\section{Limitations}
\label{app:limitations}

The contextual lasso has strong prediction, interpretation, and selection properties. Naturally, it also has several weaknesses. As with the ordinary lasso, the contextual lasso may select a single feature from a group of highly correlated features. This shortcoming could be remedied by introducing an $\ell_2^2$-regularizer in a spirit similar to the elastic net \citep{Zou2005}. Another potential drawback is that the contextual lasso does not guarantee the complete exclusion of an explanatory feature. That is, it cannot ensure the coefficient for a feature is nonzero for every possible $\mathbf{z}$, which might limit interpretability in certain settings. Full exclusion could be achieved by adding an $\ell_1$-regularizer on the weights of the first layer, similar to the lassonet. Finally, as is typical of neural networks, the objective function of the contextual lasso is nonconvex, which can complicate analysis of its properties.

\end{document}